\documentclass[11pt]{article}
\usepackage[final]{acl}
\usepackage{times}
\usepackage{latexsym}
\usepackage[T1]{fontenc}
\usepackage[utf8]{inputenc}
\usepackage{microtype}
\usepackage{inconsolata}
\usepackage{graphicx}
\usepackage{hyperref}     
\usepackage{url}            
\usepackage{booktabs}       
\usepackage{amsfonts}       
\usepackage{nicefrac}       
\usepackage{microtype}      
\usepackage{xcolor}         
\usepackage{multirow}
\usepackage{threeparttable}
\usepackage{adjustbox}
\usepackage{makecell}
\usepackage{graphicx}
\usepackage{multicol}
\usepackage{multirow}
\usepackage{subcaption}
\usepackage{float}
\usepackage[table]{xcolor}
\usepackage{amsmath,amssymb,amsthm,mathtools}
\definecolor{lightgrayrow}{gray}{0.93}

\definecolor{bestbg}{HTML}{D9EAD3}
\definecolor{secondbg}{HTML}{DDEBF7}

\newcommand{\best}[1]{\cellcolor{bestbg}\textbf{#1}}
\newcommand{\second}[1]{\cellcolor{secondbg}#1}
\newtheorem{theorem}{Theorem}

\newcommand{\Bset}{\mathcal{B}}

\newcommand{\Ldist}{L_{\mathrm{Bucket\text{-}SFT}}}
\newcommand{\Lhdpo}{L_{\mathrm{HDPO}}}

\newcommand{\sigmoid}{\sigma}
\newcommand{\pos}[1]{\left[#1\right]_+}
\usepackage[most]{tcolorbox}
\usepackage{tabularx}
\usepackage[most]{tcolorbox}
\usepackage{xcolor}
\usepackage{fancyvrb}

\usepackage[table]{xcolor}
\usepackage{booktabs}
\usepackage{multirow}
\usepackage{graphicx}
\usepackage{enumerate}
\usepackage{enumitem}
\usepackage{array}

\definecolor{headerrow}{HTML}{EAF3F8}     
\definecolor{subheaderrow}{HTML}{F5FAFD}  
\definecolor{stripegray}{HTML}{FAFAFA}    
\definecolor{bucketrow}{HTML}{FFF4D6}     
\definecolor{hdporow}{HTML}{EAF6EA}       
\definecolor{modelcell}{HTML}{F1F4F7}     
\definecolor{rulegray}{HTML}{BFC7CF}      
\definecolor{bucketrow}{gray}{0.92}

\arrayrulecolor{rulegray}
\renewcommand{\arraystretch}{1.12}
\definecolor{polyblue}{HTML}{1F4E79}
\definecolor{polylight}{HTML}{F3F8FC}
\definecolor{polyteal}{HTML}{0F766E}
\definecolor{polygray}{HTML}{4B5563}
\definecolor{pendinggray}{HTML}{777777}

\newcommand{\imp}[1]{{\textit{#1}}}

\newcolumntype{Y}{>{\raggedright\arraybackslash}X}
\newtcolorbox{takeawaybox}[1]{
  enhanced,
  breakable,
  colback=gray!6,
  colframe=gray!35,
  colbacktitle=gray!18,
  coltitle=black!75,
  boxrule=0.6pt,
  arc=2mm,
  left=1.2mm,
  right=1.2mm,
  top=0.8mm,
  bottom=0.8mm,
  fonttitle=\bfseries,
  title={Takeaway #1}
}

\title{\textit{PolyAlign}: Conditional Human-Distribution Alignment}

\author{
    L. D. M. S. Sai Teja\textsuperscript{1} \enspace 
    Ufaq Khan\textsuperscript{2} \enspace
    Sathira Silva\textsuperscript{2} \enspace \\
    \textbf{Xiao Wu}\textsuperscript{\textbf{2}} \enspace
    \textbf{Muhammad Haris Khan}\textsuperscript{2} \enspace \\
    \textsuperscript{1}NIT Silchar, India \enspace
    \textsuperscript{2}MBZUAI, Abu Dhabi, UAE \\
    \texttt{saitejalekkala05@gmail.com, \{ufaq.khan, muhammad.haris\}@mbzuai.ac.ae}
}

\begin{document}

\maketitle

\begin{abstract}
Standard supervised fine-tuning (SFT) and preference optimization often collapse diverse interactions into a single assistant style, suppressing natural variation across languages, tasks, and dialogue settings.
We study this problem as \emph{Conditional Human-Distribution Alignment}: models should match the human response distribution appropriate to the current interaction context, rather than a universal response style. We introduce \textbf{PolyAlign}, a distribution-aware alignment framework that organizes interaction data into bucket-specific human reference distributions defined by language, interaction track, response family, and length. \textbf{PolyAlign} combines Bucket-Aware SFT, which balances optimization across heterogeneous buckets, with Human-Distribution Preference Optimization (\textit{HDPO}), which regularizes preference learning using critic-estimated distance to bucket-specific human support. Across a bilingual evaluation suite covering English and Chinese single- and multi-turn settings, \textbf{PolyAlign} improves conditional naturalness and distributional faithfulness while preserving competitive task utility. The results\footnote{GitHub: \url{https://github.com/saitejalekkala33/PolyAlign.git}} suggest that post-training should move beyond global alignment objectives toward interaction-aware alignment with human response distributions.
\end{abstract}

\begin{figure}
    \centering
    \includegraphics[width=1.0\linewidth]{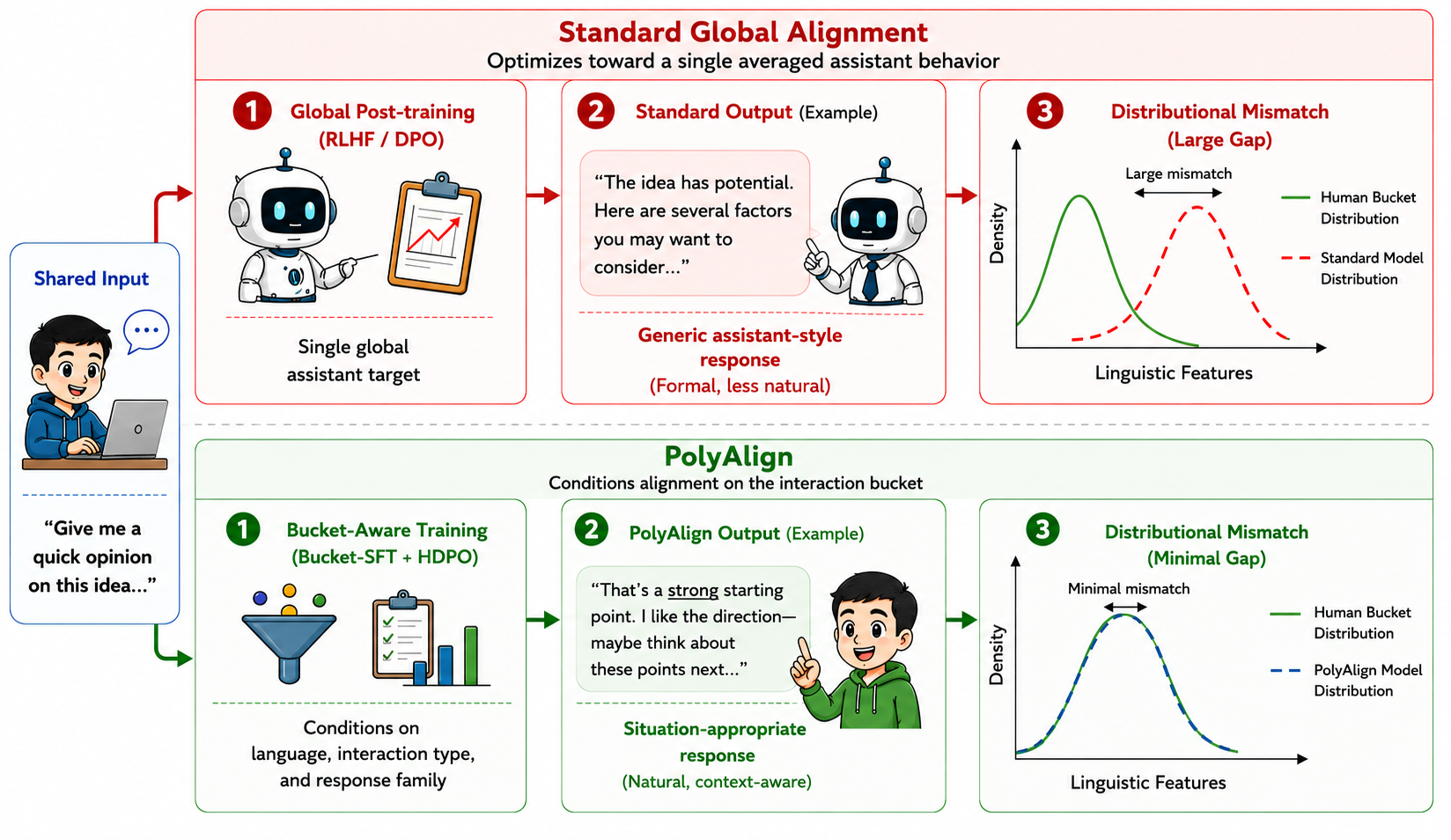}
    \caption{\textbf{PolyAlign vs. global alignment.} Unlike standard RLHF/DPO-style post-training, which can collapse diverse contexts into a generic assistant style, PolyAlign aligns responses to human distributions for more natural, situation-appropriate generation.}
    \label{fig:intro-fig}
\end{figure}

\section{Introduction}
\label{sec:introduction}

Large language models (LLMs) have become capable through large-scale pretraining, instruction tuning, and preference-based post-training. These methods have substantially improved helpfulness and instruction following, enabling modern assistant-style systems.
Scaling studies established that sufficiently large autoregressive models can perform a wide range of tasks from prompts alone \citep{brown2020language}, while instruction tuning showed that \textit{SFT} on diverse task instructions can substantially improve zero-shot generalization and interaction quality \citep{wei2021finetuned,sanh2021multitask,chung2024scaling,wang2023self}. Further progress came from alignment methods, based on human demonstrations, preference data, and reinforcement or preference optimization \citep{christiano2017deep,stiennon2020learning,ouyang2022training,bai2022constitutional,bai2022training,rafailov2023direct}. Standard post-training often pushes diverse interactions toward a single generic assistant style, rather than the kind of response humans would naturally give in each setting.

\noindent PolyAlign aligns model responses across multiple interaction settings by conditioning post-training on  human response distributions. We formulate alignment here as \emph{conditional human-distribution alignment}, shown in Fig~\ref{fig:intro-fig}. \textit{The \textbf{human distribution} is the set of responses that humans would naturally produce for a given interaction setting and how they vary in content, style, and form. This is represented using human responses and their linguistic-feature patterns within each bucket.} 

Preference optimization methods such as DPO, ORPO, SimPO, KTO, and RRHF demonstrate that post-training objectives can strongly reshape response behavior without requiring full RL pipelines \citep{rafailov2023direct,yuan2023rrhf,hong2024orpo,meng2024simpo,ethayarajh2024kto}.  Controllable generation has long emphasized that useful text systems should support movement across a family of target distributions rather than optimize a single generic objective \citep{keskar2019ctrl,dathathri2019plug,krause2021gedi,yang2021fudge,li2021prefix,lester2021power}. Distribution-matching views of post-training suggest that standard SFT can over-concentrate the distribution of learned generation, motivating more explicit objectives to target response distributions \citep{korbak2023pretraining,li2024entropic}. 

\paragraph{Contributions.}
\begin{enumerate}[leftmargin=3ex,noitemsep]
    \item We formulate \textbf{conditional human-distribution alignment}: post-training should match the human response distribution of each interaction context rather than enforce a single global assistant behavior. \imp{The goal is the right kind of answer for the right interaction.}
    \item We introduce \textbf{PolyAlign}: \textbf{Bucket-SFT} equalizes bucket mass to minimize macro bucket risk, while \textbf{HDPO} guides preference learning toward bucket-specific human support. \imp{PolyAlign makes both post-training interaction-conditioned.}
    \item We develop an evaluation protocol combining utility, bucket-level distributional metrics, the utility--naturalness frontier, human ratings, blinded bucket coherence, agreement, and LLM judges. \imp{It measures whether models remain useful while matching the appropriate human response distribution.}
\end{enumerate}

More broadly, our goal is to shift the alignment question from \textit{``how do we make a model better on average?''} $\rightarrow$ \textit{``how do we make a model produce the right kind of answer for the right kind of interaction?''} We view this as a natural next step for post-training: moving beyond generic alignment toward conditional, naturalistic, and interaction-aware alignment.
\section{Related Work}
\label{sec:related_work}
\noindent\textbf{Instruction tuning and preference-based alignment.}
LLM post-training has advanced from instruction tuning, which improved few-shot and zero-shot behavior \citep{brown2020language,wei2021finetuned,sanh2021multitask,chung2024scaling}, to synthetic and curated alignment corpora such as Self-Instruct, LIMA, and OpenAssistant \citep{wang2023self,zhou2023lima,kopf2023openassistant}. Human-feedback methods likewise evolved from RLHF pipelines \citep{christiano2017deep,stiennon2020learning,ouyang2022training,bai2022constitutional,bai2022training} to offline preference objectives including RRHF, DPO, ORPO, SimPO, and KTO \citep{yuan2023rrhf,rafailov2023direct,hong2024orpo,meng2024simpo,ethayarajh2024kto}. PolyAlign builds on these advances but shifts the target from a single global assistant behavior to interaction-specific human response distributions.         

\noindent\textbf{Structured alignment and controllable generation.}
Our work is related to approaches that treat alignment as structured rather than one-dimensional. Multi-attribute frameworks such as SteerLM and HelpSteer decompose helpfulness into multiple dimensions \citep{dong2023steerlm,wang2024helpsteer,wang2024helpsteer2}, while studies of diversified preferences show that feedback datasets can encode distinct alignment behaviors \citep{zeng2024diversified}. Controllable generation methods similarly steer models across behavior families rather than a single generic mode, as shown by CTRL, PPLM, GeDi, FUDGE, Prefix-Tuning, and Prompt Tuning \citep{keskar2019ctrl,dathathri2019plug,krause2021gedi,yang2021fudge,li2021prefix,lester2021power}. These works motivate our formulation, but instead of relying on manually specified attributes or globally aggregated preferences, we model human response distributions conditioned on language, interaction track, and response family.

\noindent\textbf{Distribution-aware post-training.} Our framework also builds on distributional views of post-training, where preference information is incorporated directly into language-model training \citep{korbak2023pretraining} and distribution-matching objectives are used to reduce the over-concentration often induced by standard SFT \citep{li2024entropic}. PolyAlign extends this perspective by targeting the appropriate human response distribution for each interaction bucket. Bucket-SFT performs bucket-aware supervised learning against human reference distributions, while HDPO adds bucket-aware weighting and a distribution-matching regularizer to offline preference optimization. We study this setting on compact open models such as Qwen2.5, Gemma~2, and Llama~3.2. The entire pipeline is given in Fig~\ref{fig:placeholder}.

\begin{figure*}[t]
    \centering
    \includegraphics[width=1.0\linewidth]{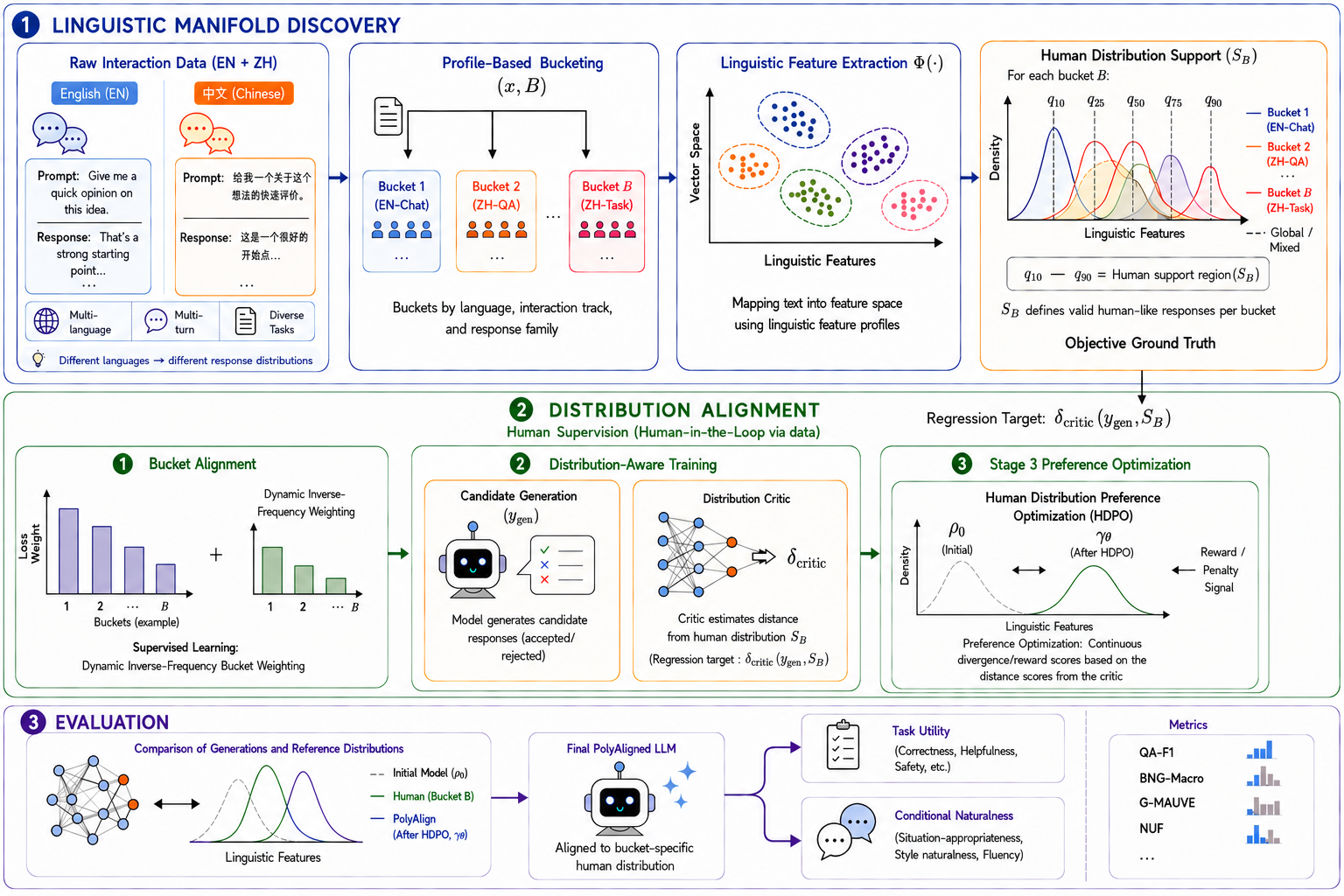}
    \caption{\textbf{PolyAlign pipeline.} PolyAlign organizes bilingual interaction data into bucket-specific human distributions, then aligns models through bucket-weighted SFT, critic-based distribution training, and HDPO. The final model is evaluated for task utility and conditional naturalness using QA-F1, BNG-Macro, G-MAUVE, and NUF.}
    \label{fig:placeholder}
\end{figure*}
\section{Methodology}
\label{sec:method}

\subsection{Bucketed Human Reference Distributions}
\label{sec:method_setup}
We align model outputs to the human distribution for each interaction setting rather than learning one global behavior. Buckets are defined a priori as $b=(\ell,t,f,r)$ by language, single- or multi-turn track, response family, and length. With a language-inclusive tokenizer, short, medium, long, and extra-long denote $\leq40$, $41$-$120$, $121$-$240$, and $>240$ tokens. Let $b_i\in\Bset$ be the bucket of instance $i$, $n_b$ its size, and $N:=\sum_{b\in\Bset}n_b$. We retain buckets with at least 20 human responses and estimate a reference distribution $\Lambda_b$ over features $z\in\mathbb{R}^d$. For each, we record size, dataset/style composition, feature-wise means, medians, standard deviations, and ${q_{10},q_{25},q_{50},q_{75},q_{90}}$. The default support is $[q_{10},q_{90}]$; $[q_{25},q_{75}]$ and stricter minimum support are sensitivity checks. We use $\Lambda_b$ to rebalance SFT and define a critic score, with lower values indicating closer human-distribution alignment. Table~\ref{tab:bucket-inventory} reports the summary of the bucket inventory for both En and Zh languages, along all the train, validation and test splits.



\subsection{Bucket-Aware Supervised Fine-Tuning}
\label{sec:dist_sft}

A central difficulty in this alignment is bucket imbalance: frequent interaction regimes dominate in standard SFT, even though when the objective is to balance alignment across buckets (Table~\ref{tab:dataset_inventory}).

Bucket-SFT addresses this by assigning equal total optimization mass to every bucket. Let $\ell_i(\theta)$ denote the per-example token-normalized loss under model parameters $\theta$. We assign the pre-serialization weight for each $i$ in bucket $b$:  $\widetilde{w}_b = \frac{N}{|\Bset|\,n_b}.$ The resulting Bucket-SFT objective is
\begin{equation}
\label{eq:dist_sft_objective}
\Ldist(\theta)
:=
\frac{\sum_{i=1}^{N} \widetilde{w}_{b_i}\,\ell_i(\theta)}
     {\sum_{i=1}^{N} \widetilde{w}_{b_i}}.
\end{equation}

\begin{theorem}[\textbf{\textit{Bucket-SFT Optimizes Exact Macro Bucket Risk}}]
\label{thm:dist_sft_macro}
For the weights in $\widetilde{w}_b$, the Bucket-SFT objective in Eq.~\eqref{eq:dist_sft_objective} satisfies
\begin{equation}
\label{eq:dist_sft_macro_risk}
\Ldist(\theta)
=
\frac{1}{|\Bset|}
\sum_{b \in \Bset}
\frac{1}{n_b}
\sum_{i:\, b_i=b} \ell_i(\theta).
\end{equation}
Equivalently, each bucket contributes exactly the same total optimization mass:
\begin{equation}
\label{eq:equal_bucket_mass}
\sum_{i:\, b_i=b} \widetilde{w}_{b_i}
=
\frac{N}{|\Bset|}
\qquad \text{for every } b \in \Bset.
\end{equation}
\end{theorem}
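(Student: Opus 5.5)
The argument is a direct computation, regrouping the sum over instances in Eq.~\eqref{eq:dist_sft_objective} by bucket. Since every instance $i$ belongs to exactly one bucket $b_i\in\Bset$, and $\widetilde{w}_{b_i}$ depends only on that bucket, any sum of the form $\sum_{i=1}^N g(b_i)\,h_i$ can be rewritten as $\sum_{b\in\Bset} g(b)\sum_{i:\,b_i=b} h_i$. We use $N=\sum_b n_b$ and assume $n_b\ge 1$ for all retained buckets; this holds because buckets with fewer than 20 responses are discarded, so $\widetilde{w}_b$ is well defined.

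We prove Eq.~\eqref{eq:equal_bucket_mass} first, since it yields the denominator. For a fixed $b$, the bucket contains $n_b$ instances, each carrying weight $\widetilde{w}_b=N/(|\Bset|\,n_b)$. Summing gives
\begin{equation*}
\sum_{i:\,b_i=b}\widetilde{w}_{b_i}=n_b\cdot\frac{N}{|\Bset|\,n_b}=\frac{N}{|\Bset|}.
\end{equation*}
Summing this over all $b\in\Bset$, the denominator of Eq.~\eqref{eq:dist_sft_objective} is
\begin{equation*}
\sum_{i=1}^N\widetilde{w}_{b_i}=|\Bset|\cdot\frac{N}{|\Bset|}=N.
\end{equation*}

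For the numerator, the same regrouping gives
\begin{equation*}
\sum_{i=1}^N\widetilde{w}_{b_i}\ell_i(\theta)=\sum_{b\in\Bset}\frac{N}{|\Bset|\,n_b}\sum_{i:\,b_i=b}\ell_i(\theta).
\end{equation*}
Dividing by $N$ cancels the factor $N$ and leaves exactly the right-hand side of Eq.~\eqref{eq:dist_sft_macro_risk}. The word ``equivalently'' in the statement is then justified: with the bucket masses all equal to $N/|\Bset|$, the normalized objective is the uniform average over buckets of the within-bucket mean losses.

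No step is a real obstacle. The only point that needs care is the bookkeeping assumption that the bucket partition is exact, meaning each instance is counted in precisely one retained bucket and $n_b$ is the count used in the weights. In particular, the ``pre-serialization'' qualifier should be read as saying the weights are applied before any batching or rescaling changes them. If the implementation renormalized weights per minibatch, the identity would hold only in expectation, but for the full-dataset objective as written it is exact.
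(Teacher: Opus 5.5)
Your proposal is correct and follows the paper's proof essentially step for step: you establish the equal-mass identity $\sum_{i:\,b_i=b}\widetilde{w}_{b_i}=N/|\Bset|$, sum it over buckets to get the denominator $N$, then regroup the numerator by bucket and cancel the factor $N$. Your extra remarks, that $n_b\ge 1$ for retained buckets and that per-minibatch renormalization would give the identity only in expectation, are sensible caveats the paper leaves implicit.
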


\begin{takeawaybox}{1}
Bucket-SFT does not merely reweight examples; it exactly converts supervised learning into macro bucket risk minimization, so every bucket receives equal optimization mass, Proof~\ref{app:proof_dist_sft_macro}.
\end{takeawaybox}

\subsection{Learning the Human Distribution}
\label{sec:human_distribution}

We next define the critic target used by HDPO, Appendix~\ref{app:critic-arc}. \textit{For bucket $b$, the critic $s_\phi(y,b)$ should assign low scores to responses whose feature vector stays within the bucket support region $\Lambda_b$, and higher scores to responses that deviate from it.} Let $z$ be the response feature vector, and let $J_b(z)$ be the set of feature indices shared by $z$, the bucket support, and the bucket feature statistics, with $J_b(z)\neq\varnothing$. For each $j\in J_b(z)$, let $[l_{bj},u_{bj}]$ be the support interval and define the normalization scale $s_{bj}:=\max\{u_{bj}-l_{bj},\,\mathrm{std}_{bj},\,\varepsilon\}, \qquad \varepsilon>0.$ The normalized bucket-support distance is then
\begin{equation}
\label{eq:bucket_support_distance}
D_b(z)
:=
\frac{1}{|J_b(z)|}
\sum_{j\in J_b(z)}
\frac{\pos{l_{bj}-z_j}+\pos{z_j-u_{bj}}}{s_{bj}}.
\end{equation}

\begin{theorem}[\textbf{\textit{Bucket-Support Distance Is a Continuous Relaxation of Human Membership}}]
\label{thm:support_distance}
For the distance in Eq.~\eqref{eq:bucket_support_distance}, the following hold:
\begin{align}
\label{eq:support_distance_properties}
&\textnormal{(i)} D_b(z)\ge 0, \nonumber\\
&\textnormal{(ii)} D_b(z)=0
\iff
z_j \in [l_{bj},u_{bj}]
\ \forall j\in J_b(z), \nonumber\\
&\textnormal{(iii)} D_b(z)\ \text{is continuous and piecewise linear in } z.
\end{align}
\end{theorem}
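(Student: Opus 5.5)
The plan is to analyze the per-feature penalty
\[
g_j(z_j):=\frac{\pos{l_{bj}-z_j}+\pos{z_j-u_{bj}}}{s_{bj}},
\]
and then pass the three properties through the averaging in Eq.~\eqref{eq:bucket_support_distance}. Two preliminary facts are needed. First, the scale is strictly positive, since $s_{bj}\ge\varepsilon>0$ by definition. Second, the support interval is well defined with $l_{bj}\le u_{bj}$, because it is built from quantiles $q_{10}\le q_{90}$ (or $q_{25}\le q_{75}$). I will also treat the index set $J_b(z)$ as fixed on the region under consideration. It depends on which coordinates of $z$ are present, not on their values, and it is nonempty by assumption, so $1/|J_b(z)|$ is a finite positive constant.

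For (i), each positive part is nonnegative and $s_{bj}>0$, so $g_j\ge 0$. A positive-weighted average of nonnegative terms is nonnegative.

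For (ii), a finite average of nonnegative terms with positive weights vanishes iff every term vanishes. So it suffices to show $g_j(z_j)=0\iff z_j\in[l_{bj},u_{bj}]$.
\begin{itemize}
\item The reverse direction is immediate: if $l_{bj}\le z_j\le u_{bj}$, both brackets are zero.
\item For the forward direction, $g_j=0$ forces both brackets to be zero, since each is nonnegative. This gives $z_j\ge l_{bj}$ and $z_j\le u_{bj}$.
\end{itemize}
Note that $l_{bj}\le u_{bj}$ also guarantees the two brackets are never simultaneously positive, although (ii) does not strictly require this.

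For (iii), $t\mapsto\pos{t}=\max\{t,0\}$ is continuous and piecewise linear, with a single kink at $0$. Composing it with the affine maps $z\mapsto l_{bj}-z_j$ and $z\mapsto z_j-u_{bj}$ yields continuous piecewise linear functions on $\mathbb{R}^d$. Sums and positive scalar multiples preserve both properties. Hence $D_b$ is a finite positive combination of such functions, and it is linear on each cell of the polyhedral partition cut out by the hyperplanes $z_j=l_{bj}$ and $z_j=u_{bj}$. Explicitly, $g_j$ equals $(l_{bj}-z_j)/s_{bj}$ below the interval, $0$ inside it, and $(z_j-u_{bj})/s_{bj}$ above it.

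The main obstacle is not algebraic but definitional: $J_b(z)$ could in principle vary with $z$, which would break continuity. I will handle this by stating explicitly that $J_b(z)$ is determined by feature availability, so it is constant for vectors with a common feature schema, and then proving (iii) on that domain. I will also note that $s_{bj}$ depends only on bucket statistics and not on $z$.
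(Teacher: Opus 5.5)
Your proposal is correct and follows essentially the same route as the paper. Both arguments use per-feature nonnegative penalties with $s_{bj}>0$ for (i), the fact that an average of nonnegative terms vanishes iff each term vanishes for (ii), and closure of continuous piecewise-linear maps under positive scaling and finite sums for (iii). Your explicit remark that $J_b(z)$ must be held fixed is a useful clarification that the paper leaves implicit: it depends on feature availability rather than feature values, so it is constant on a common feature schema.
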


\begin{takeawaybox}{2}
Bucket-Support Distance $D_b(z)=0$ when all usable active features lie inside $\Lambda_b$, and increases as the response moves farther outside the bucket support, Proof~\ref{app:proof_support_distance}.
\end{takeawaybox}

\subsection{Human-Distribution Preference Optimization (HDPO)}
\label{sec:hdpo}

We now incorporate bucket-aware human-distribution information into preference optimization. Consider a training triple $(x,y^{+},y^{-})$, where $y^{+}$ is the chosen response, $y^{-}$ is the rejected response, and both belong to bucket $b$. HDPO augments a standard sigmoid-DPO objective with a critic-based regularizer that favors responses that are both preferred and closer to the corresponding human bucket support $\Lambda_{b}$.

\noindent Define the policy log-probability margin $\Delta_{\pi}$, and the reference-model margin $\Delta_{\mathrm{ref}}$. 
\begin{equation}
\label{eq:policy_margin}
\Delta_{\pi} := \log \pi_{\theta}(y^{+}\mid x) - \log \pi_{\theta}(y^{-}\mid x),
\end{equation}
\vspace{-10pt}
\begin{equation}
\label{eq:reference_margin}
\Delta_{\mathrm{ref}} := \log \pi_{\mathrm{ref}}(y^{+}\mid x) - \log \pi_{\mathrm{ref}}(y^{-}\mid x).
\end{equation}

\noindent The sigmoid-DPO loss is
\begin{equation}
\label{eq:dpo_loss}
L_{\mathrm{DPO}}(\theta)
=
-\log \sigmoid\!\Big(\beta(\Delta_{\pi}-\Delta_{\mathrm{ref}})\Big),
\end{equation}
where $\beta>0$ is the inverse-temperature parameter. To inject distributional information, we define $p_{\theta} := \sigmoid(\beta\Delta_{\pi}),$ and use the critic scores $s_{\phi}(y^{+},b)$ and $s_{\phi}(y^{-},b)$ to form the HDPO regularizer
\begin{equation}
\label{eq:hdpo_regularizer}
\small
R_{\mathrm{HDPO}}(\theta)
:=
p_{\theta}\, s_{\phi}(y^{+},b)
+
(1-p_{\theta})\, s_{\phi}(y^{-},b).
\end{equation}

\noindent This regularizer is small when the policy places probability mass on responses that the critic judges to be closer to the target human bucket support. The full HDPO objective then combines preference learning with critic-guided distribution matching:
\begin{equation}
\label{eq:hdpo_objective}
\scriptsize
\Lhdpo(\theta)
=
\frac{1}{K}\sum_{i=1}^{K}
w_i\Big(
L_{\mathrm{DPO}}^{(i)}(\theta)
+
\lambda_{\mathrm{hd}}\,R_{\mathrm{HDPO}}^{(i)}(\theta)
\Big),
\end{equation}
where $K$ is the number of preference pairs, $\lambda_{\mathrm{hd}}\ge 0$ controls the strength of the distributional regularizer, and $w_i>0$ denotes an optional bucket-aware training weight.

\begin{theorem}[\textbf{\textit{Distributional Alignment of the HDPO Regularizer with Sigmoid DPO}}]
\label{thm:hdpo_alignment}
Consider a single chosen/rejected pair $(y^{+},y^{-})$ in bucket $b$. We have $\Delta_{\pi}$ and $p_{\theta},$ and let $s_{\phi}(y^{+},b)$ and $s_{\phi}(y^{-},b)$ be fixed critic scores. For the regularizer in Eq.~\eqref{eq:hdpo_regularizer},
\begin{equation}
\label{eq:hdpo_reg_derivative}
\small
\frac{dR_{\mathrm{HDPO}}}{d\Delta_{\pi}} = \beta\,p_{\theta}(1-p_{\theta})
\Big(
s_{\phi}(y^{+},b)-s_{\phi}(y^{-},b)
\Big).
\end{equation}
Therefore, if $s_{\phi}(y^{+},b) < s_{\phi}(y^{-},b),$ then minimizing $R_{\mathrm{HDPO}}$ strictly favors increasing the chosen-vs-rejected margin $\Delta_{\pi}$. Furthermore, for the sigmoid-DPO loss in Eq.~\eqref{eq:dpo_loss},
\begin{equation}
\label{eq:dpo_derivative}
\frac{dL_{\mathrm{DPO}}}{d\Delta_{\pi}}
=
-\beta\Big(1-\sigmoid\big(\beta(\Delta_{\pi}-\Delta_{\mathrm{ref}})\big)\Big)
<0.
\end{equation}
Hence, whenever the critic judges the chosen response to be closer to the human bucket support than the rejected response, the sigmoid-DPO term and the HDPO regularizer are distributionally aligned: both push the policy toward larger $\Delta_{\pi}$.
\end{theorem}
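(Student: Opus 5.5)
The argument is a direct chain-rule computation, treating $\Delta_{\mathrm{ref}}$, $\beta$, and the critic scores $s_{\phi}(y^{\pm},b)$ as constants with respect to $\Delta_{\pi}$. These quantities depend only on $\pi_{\mathrm{ref}}$ and the frozen critic, not on $\theta$, so this is legitimate. The only analytic fact needed is the logistic identity $\sigmoid'(u)=\sigmoid(u)(1-\sigmoid(u))$, together with $\frac{d}{du}\log\sigmoid(u)=1-\sigmoid(u)$, which follows from it.

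\textbf{Step 1: the regularizer derivative.} First I rewrite Eq.~\eqref{eq:hdpo_regularizer} as an affine function of $p_{\theta}$:
\[
R_{\mathrm{HDPO}} = s_{\phi}(y^{-},b) + p_{\theta}\big(s_{\phi}(y^{+},b)-s_{\phi}(y^{-},b)\big).
\]
Since $p_{\theta}=\sigmoid(\beta\Delta_{\pi})$, the chain rule gives $dp_{\theta}/d\Delta_{\pi}=\beta p_{\theta}(1-p_{\theta})$. Multiplying by the constant coefficient yields Eq.~\eqref{eq:hdpo_reg_derivative}.

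\textbf{Step 2: the sign of that derivative.} The prefactor $\beta p_{\theta}(1-p_{\theta})$ is strictly positive for every finite $\Delta_{\pi}$, because $\beta>0$ and $\sigmoid$ takes values in $(0,1)$. Hence when $s_{\phi}(y^{+},b)<s_{\phi}(y^{-},b)$, the derivative is strictly negative. It follows that $R_{\mathrm{HDPO}}$ is strictly decreasing in $\Delta_{\pi}$, so minimizing it favors increasing the margin; a gradient-descent step on $R_{\mathrm{HDPO}}$ moves $\Delta_{\pi}$ upward.

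\textbf{Step 3: the DPO derivative.} Set $u=\beta(\Delta_{\pi}-\Delta_{\mathrm{ref}})$, so that $L_{\mathrm{DPO}}=-\log\sigmoid(u)$. Then
\[
\frac{dL_{\mathrm{DPO}}}{d\Delta_{\pi}} = -\beta\big(1-\sigmoid(u)\big).
\]
This quantity is strictly negative because $\sigmoid(u)<1$ and $\beta>0$, which gives Eq.~\eqref{eq:dpo_derivative}.

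\textbf{Step 4: combining.} Both derivatives are strictly negative under the hypothesis on the critic scores, so both terms decrease as $\Delta_{\pi}$ increases. Because $\lambda_{\mathrm{hd}}\ge 0$ and $w_i>0$, the per-pair combination $w_i(L_{\mathrm{DPO}}+\lambda_{\mathrm{hd}}R_{\mathrm{HDPO}})$ also has a strictly negative derivative in $\Delta_{\pi}$. This is the claimed alignment.

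There is no real obstacle in this argument. The one point needing care is interpretive: the derivative is taken with respect to the scalar margin $\Delta_{\pi}$, not the parameters $\theta$. The statement "pushes toward larger $\Delta_{\pi}$" should therefore be read as a statement about the sign of the loss's dependence on $\Delta_{\pi}$; via the chain rule, the $\theta$-gradient of each term is a negative multiple of $\nabla_{\theta}\Delta_{\pi}$. I would also note explicitly that $p_{\theta}$ omits $\Delta_{\mathrm{ref}}$, so the two sigmoids are evaluated at different arguments. This does not affect either sign argument.
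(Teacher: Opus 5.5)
Your proposal is correct and follows the paper's proof essentially step for step. Both rewrite $R_{\mathrm{HDPO}}=s_{\phi}(y^{-},b)+p_{\theta}\bigl(s_{\phi}(y^{+},b)-s_{\phi}(y^{-},b)\bigr)$, apply the logistic derivative identity with the strict positivity of $\beta p_{\theta}(1-p_{\theta})$, and differentiate $-\log\sigma(u)$ with $u=\beta(\Delta_{\pi}-\Delta_{\mathrm{ref}})$. Your Step 4 (strict negativity of the weighted per-pair objective) and your remark that each $\theta$-gradient is a negative multiple of $\nabla_{\theta}\Delta_{\pi}$ are valid additions that the paper leaves implicit.
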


\begin{takeawaybox}{3}
Under the sigmoid-DPO variant used in PolyAlign, HDPO is locally distributionally aligned with preference learning: when the chosen response is distributionally better, both objectives push the policy toward it, Proof~\ref{app:proof_hdpo_alignment}.
\end{takeawaybox}
\section{Evaluation}
\label{sec:evaluation}

We evaluate PolyAlign along two axes: (1) \textbf{Automatic Measures}, comprising task utility, conditional naturalness, and LLM-judge; and (2) \textbf{Human Evaluation}, with a blinded bucket-coherence study, human-judge scores, and their agreement.

\paragraph{Automatic Measures.}
We measure task utility using \textit{QA-F1}, which captures overlap with reference answers. Conditional naturalness is evaluated using the Bucket Normality Gap (\textit{BNG}), which averages normalized violations of feature support; \textit{G-MAUVE}, which compares generated and reference distributions within each conditioning regime \citep{pillutla2021mauve}; and the Naturalness--Utility Frontier (\textit{NUF}) hypervolume. We additionally report \textbf{\textit{Agg}}, the geometric mean of \textit{QA-F1}, $1/(1+\mathrm{BNG})$, Conditional MAUVE, and NUF. For LLM-as-a-judge evaluation, we use Qwen3-30B-A3B-Instruct-2507 and Llama-3.3-70B-Instruct as judges. Each response is rated from 1 to 5 on eight dimensions: task success, factual grounding, instruction following, reference alignment, conditional appropriateness, response shape and length, discourse naturalness, and safety. Scores are mapped to a 0-100 scale as $100(s-1)/4$. We report composites for \textit{Utility}, \textit{Conditional Naturalness}, and \textit{Distribution Faithfulness}. The full prompts and rubrics are in Appendix~\ref{app:llm-judge-prompt-rubric}.


\paragraph{Human Evaluation.}
We complement the automatic evaluation with three human-based analyses. We evaluate 100 outputs per method in each language, corresponding to 500 English and 500 Chinese outputs across the five methods. Each English output is independently rated by four annotators, while each Chinese output is rated by two annotators, on usefulness, answerability, and naturalness. Second, we conduct a blinded bucket-coherence study, we sampled 50 En and 50 Zh outputs from each of five and four situations, yielding 250 and 400 outputs respectively, and measure bucket fit, stylistic coherence, and bucket-identification accuracy. Finally, we assess the reliability of the collected ratings using ICC and Krippendorff's $\alpha$.

\section{Experiments and Results}


\paragraph{Benchmark.}
PolyAlign is studied in a bilingual setting, English and Chinese. Our goal is to align model outputs to the appropriate human-linguistic response distribution for the interaction setting at hand. We organize the corpus into five interaction situations: \textit{assistant-like}, \textit{longform-qa}, \textit{open-chat}, \textit{qa-search}, and \textit{task-dialogue}. 
All instances are canonicalized and deduplicated after normalization to prevent train-test leakage. More details in Appendix~\ref{app:data_details}.

\begin{table}[H]
\centering
\scriptsize
\resizebox{1.0\linewidth}{!}{%
\begin{tabular}{lllcllc}
\toprule
\textbf{Lang.} & \textbf{Dataset} & \textbf{Situation Category} & \textbf{Track} & \textbf{Split Counts} & \textbf{Count} & \textbf{Total} \\
\midrule
\multirow{8}{*}{en} & Dolly~\cite{conover2023free} & \textit{assistant-like} & \multirow{5}{*}{single} & 13,525 / 713 / 757 & 14,995 & \multirow{8}{*}{\rotatebox[origin=c]{90}{\shortstack{584,422 / 55,575 / \\53,855 = \textbf{693,852}}}} \\
 & ELI5~\cite{fan2019eli5} & \textit{longform-qa} & & 91,772 / 5,446 / 7,786 & 105,004 & \\
 & MS~MARCO~\cite{bajaj2016ms} & \textit{qa-search} & & 80,143 / 9,754 / 9,399 & 99,296 & \\
 & SQuAD~v2~\cite{rajpurkar2016squad} & \textit{qa-search} & & 117,444 / 12,832 / 11,870 & 142,146 & \\
 & Natural Questions~\cite{kwiatkowski2019natural} & \textit{qa-search} & & 90,133 / 5,081 / 5,017 & 100,231 & \\
\cmidrule(lr){2-6}
 & DailyDialog~\cite{li2017dailydialog} & \textit{open-chat} & \multirow{3}{*}{multi} & 37,377 / 3,774 / 3,681 & 44,832 & \\
 & CoQA~\cite{reddy2019coqa} & \textit{qa-search} & & 98,015 / 10,620 / 7,983 & 116,618 & \\
 & MultiWOZ~\cite{budzianowski2018multiwoz} & \textit{task-dialogue} & & 56,013 / 7,355 / 7,362 & 70,730 & \\
\midrule
\multirow{6}{*}{zh} & COIG-CQIA~\cite{bai2025coig} & \textit{assistant-like} & \multirow{5}{*}{single} & 9,536 / 509 / 579 & 10,624 & \multirow{6}{*}{\rotatebox[origin=c]{90}{\shortstack{86,336 / 10,887 / \\6,357 = \textbf{103,580}}}} \\
 & HC3-Chinese~\cite{guo2023close} & \textit{longform-qa} & & 19,990 / 1,152 / 1,058 & 22,200 & \\
 & CMRC2018~\cite{cui2019span} & \textit{qa-search} & & 10,142 / 3,219 / 1,002 & 14,363 & \\
 & DRCD~\cite{shao2018drcd} & \textit{qa-search} & & 26,936 / 3,524 / 3,493 & 33,953 & \\
 & DuReader~\cite{he2018dureader} & \textit{qa-search} & & 15,923 / 1,956 / -- & 17,879 & \\
\cmidrule(lr){2-6}
 & OASST2-zh~\cite{kopf2023openassistant} & \textit{open-chat} & multi & 3,809 / 527 / 225 & 4,561 & \\
\bottomrule
\end{tabular}
}
\caption{Bilingual PolyAlign corpus inventory with realized split counts shown as train / val / test.}
\label{tab:dataset_inventory}
\end{table}

\paragraph{Models.}
We conduct experiments on Qwen2.5-1.5B, Qwen2.5-3B~\citep{qwen2025qwen25technicalreport}, Gemma2-2B~\cite{team2024gemma}, Llama-3.2-3B~\citep{grattafiori2024llama}, to examine performance trends with increasing model scale, spanning approximately 1.5B to 3B parameters. 

\paragraph{Baselines.}
We perform the experiments by considering the baselines as Normal Inference (BaseLM), Chain-of-thoughts Prompting (CoT), Full Supervised Fine-Tuning of the models (Full-SFT), and Direct Preference Optimization (DPO). More details on compute are given in the Appendix~\ref{app:implementation-compute}.

\begin{table*}[t]
\centering
\scriptsize
\begin{tabular}{ll*{10}{c}}
\toprule
\multirow{2}{*}{\textbf{Model}} &
\multirow{2}{*}{\textbf{Method}} &
\multicolumn{5}{c}{\textbf{English -- en}} &
\multicolumn{5}{c}{\textbf{Chinese -- zh}} \\
\cmidrule(lr){3-7}
\cmidrule(lr){8-12}
& & QA-F1 ($\uparrow$) & BNG ($\downarrow$)
& G-MAUVE ($\uparrow$) & NUF ($\uparrow$) & Agg ($\uparrow$)
& QA-F1 ($\uparrow$) & BNG ($\downarrow$)
& G-MAUVE ($\uparrow$) & NUF ($\uparrow$) & Agg ($\uparrow$) \\
\midrule

& BaseLM
& 0.248 & 6.344 & 0.531 & 0.170 & 0.235
& 0.195 & 2.466 & 0.586 & 0.225 & 0.293 \\

& CoT
& 0.099 & 2.592 & 0.307 & 0.222 & 0.208
& 0.022 & 1.685 & 0.289 & 0.080 & 0.117 \\

& Full-SFT
& 0.317 & 5.012 & \best{0.947} & 0.547 & 0.371
& 0.358 & 0.917 & \second{0.918} & 0.458 & 0.529 \\

& DPO
& 0.355 & 1.259 & 0.847 & 0.580 & 0.527
& 0.238 & 16.056 & 0.511 & 0.163 & 0.184 \\
\cmidrule(lr){2-12}
& Bucket-SFT
& \second{0.444} & \best{0.427} & \second{0.939}
& \second{0.585} & \best{0.643}
& \best{0.479} & \best{0.346} & \best{0.968}
& \second{0.636} & \second{0.684} \\

\multirow{-6}{*}{\rotatebox[origin=c]{90}{\textbf{Qwen2.5-1.5B}}}
& HDPO
& \best{0.463} & \second{0.465} & 0.846
& \best{0.623} & \second{0.639}
& \second{0.450} & \second{0.375} & 0.851
& \best{0.832} & \best{0.694} \\

\midrule

& BaseLM
& 0.194 & 7708.6 & 0.784 & 0.270 & 0.048
& 0.108 & 13.152 & 0.364 & 0.059 & 0.113 \\

& CoT
& 0.057 & 4.876 & 0.135 & 0.066 & 0.096
& 0.010 & \second{1.923} & 0.321 & 0.025 & 0.072 \\

& Full-SFT
& 0.308 & 411.77 & \best{0.911} & \second{0.534} & 0.138
& \second{0.352} & \best{1.439} & \second{0.946}
& \second{0.418} & \best{0.489} \\

& DPO
& 0.310 & 28526.0 & \second{0.874} & 0.365 & 0.043
& 0.081 & 36.750 & 0.342 & 0.016 & 0.058 \\

\cmidrule(lr){2-12}
& Bucket-SFT
& \second{0.387} & \second{0.383} & 0.871 & 0.516
& \second{0.595}
& 0.324 & 39.365 & \best{0.967} & \best{0.443} & 0.242 \\

\multirow{-6}{*}{\rotatebox[origin=c]{90}{\textbf{Gemma-2-2B}}}
& HDPO
& \best{0.541} & \best{0.275} & 0.788
& \best{0.853} & \best{0.731}
& \best{0.555} & 23.494 & 0.852 & 0.188 & \second{0.245} \\

\midrule

& BaseLM
& 0.200 & 4.847 & 0.558 & 0.309 & 0.277
& 0.202 & 2.479 & 0.640 & 0.525 & 0.374 \\

& CoT
& 0.101 & 2.291 & 0.228 & 0.213 & 0.197
& 0.053 & \second{1.215} & 0.322 & 0.066 & 0.150 \\

& Full-SFT
& 0.398 & 649.49 & \best{0.918} & \second{0.644} & 0.137
& 0.447 & 2.932 & \second{0.949}
& \second{0.574} & \second{0.498} \\

& DPO
& 0.267 & 4.014 & 0.757 & 0.523 & 0.381
& 0.323 & 37.530 & 0.737 & 0.079 & 0.148 \\

\cmidrule(lr){2-12}
& Bucket-SFT
& \best{0.476} & \best{0.250} & \second{0.903}
& \best{0.652} & \best{0.688}
& \best{0.502} & \best{0.584} & \best{0.964}
& \best{0.656} & \best{0.669} \\

\multirow{-6}{*}{\rotatebox[origin=c]{90}{\textbf{Qwen2.5-3B}}}
& HDPO
& \second{0.418} & \second{0.490} & 0.779 & 0.474
& \second{0.567}
& \second{0.449} & 23.537 & 0.748 & 0.182 & 0.223 \\

\midrule

& BaseLM
& 0.227 & 9.773 & 0.808 & 0.352 & 0.278
& 0.297 & 18.23 & 0.725 & 0.342 & 0.249 \\

& CoT
& 0.100 & 2.430 & 0.315 & 0.186 & 0.203
& 0.011 & 4.431 & 0.255 & 0.016 & 0.055 \\

& Full-SFT
& 0.344 & 1.139 & \best{0.906} & \second{0.627} & 0.550
& 0.265 & 0.932 & \second{0.879} & 0.292 & 0.433 \\

& DPO
& 0.356 & 24.431 & 0.786 & \best{0.632} & 0.288
& 0.103 & 31.843 & 0.552 & 0.034 & 0.088 \\

\cmidrule(lr){2-12}
& Bucket-SFT
& \second{0.372} & \second{0.373} & \second{0.867}
& 0.439 & \second{0.566}
& \best{0.460} & \second{0.633} & \best{0.948}
& \second{0.613} & \second{0.636} \\

\multirow{-6}{*}{\rotatebox[origin=c]{90}{\textbf{Llama-3.2-3B}}}
& HDPO
& \best{0.405} & \best{0.360} & 0.750 & 0.600
& \best{0.604}
& \second{0.449} & \best{0.428} & 0.707
& \best{0.877} & \best{0.665} \\

\bottomrule
\end{tabular}%
\caption{Main bilingual results. Green and blue indicate the best and second-best results.
}
\label{tab:main_results_bilingual}
\end{table*}

\subsection{Does Conditional Alignment Improve the Utility-Naturalness Trade-off?}

\imp{A distribution-aware alignment should not improve naturalness by sacrificing the usefulness, high task utility alone is insufficient if all interaction settings fall into the same generic assistant style.} We interpret the results in Table~\ref{tab:main_results_bilingual} jointly. QA-F1 measures reference-based task utility, BNG measures deviation from the linguistic support of the appropriate human bucket, G-MAUVE measures distributional similarity, and NUF summarizes the utility-naturalness frontier. 
Across 8 model-language settings, a PolyAlign method obtains highest aggregate score 7 times, showing that conditional training improves utility-naturalness trade-off. CoT provides no consistent benefit, inference-time reasoning do not recover the appropriate human response distribution, while Full-SFT and DPO are less stable across buckets and few times produce large BNG values, indicating substantial out-of-support distributional drift. Because BNG is unbounded and variance-normalized, we interpret such values as failure cases and report them alongside G-MAUVE and NUF. These reveal language-specific differences: Full-SFT got the highest \textit{En} G-MAUVE, whereas Bucket-SFT in \textit{Zh}. Combined metrics show conditional post-training best balances task utility and interaction quality.

\subsection{Is Bucket Awareness Sufficient, and When Does HDPO Help?}
PolyAlign contains two conceptually different interventions. Bucket-SFT changes how supervised optimization mass is allocated, whereas HDPO uses critic-estimated distance to the human bucket support during preference optimization. We separate their contributions by comparing Bucket-SFT with Full-SFT and HDPO with standard DPO.
\textbf{Bucket-SFT provides most stable intervention.} Relative to Full-SFT Bucket-SFT, improves QA-F1 and the aggregate score in 7/8 model-language settings. This supports the macro-bucket objective: unlike standard SFT, which can be dominated by frequent interaction regimes, Bucket-SFT assigns equal optimization mass to each bucket and preserves conditional response behavior. It trails Full-SFT in \textit{En} but tops \textit{Zh} G-MAUVE across all models, showing bucket balancing thrives on highly heterogeneous data. \imp{Bucket-SFT is the most reliable PolyAlign component as it improves conditional alignment without a separate critic.}
\textbf{HDPO improves on DPO but is not uniformly preferable to Bucket-SFT and is critic-sensitive.}
Relative to DPO, HDPO achieves higher QA-F1, lower BNG, and a higher aggregate score in all 8 settings.
Compared with Bucket-SFT, HDPO improves the aggregate score and NUF in 5/8 settings, while Bucket-SFT retains higher G-MAUVE and HDPO shows BNG failures for some \textit{Zh}-checkpoints. HDPO offers higher upside but is less stable, depending on accurate bucket support and critic calibration, given the 95\%CI in Figure~\ref{fig:CI-plots}; \imp{a miscalibrated critic steers the policy toward a distorted proxy of the human distribution.}
We give failures in Table~\ref{tab:bng_diagnostics} and analyze them rather than selecting only in which HDPO wins. \textbf{Embedding-level view:} Figure~\ref{fig:embedding_umap_shifts} qualitatively shows that different training objectives shift the response embedding distributions and their centroids. Ablations details given in Appendix~\ref{app:ablation-sec}.

\begin{figure*}[t]
    \centering
    \begin{minipage}{0.49\linewidth}
        \centering
        \includegraphics[width=\linewidth]{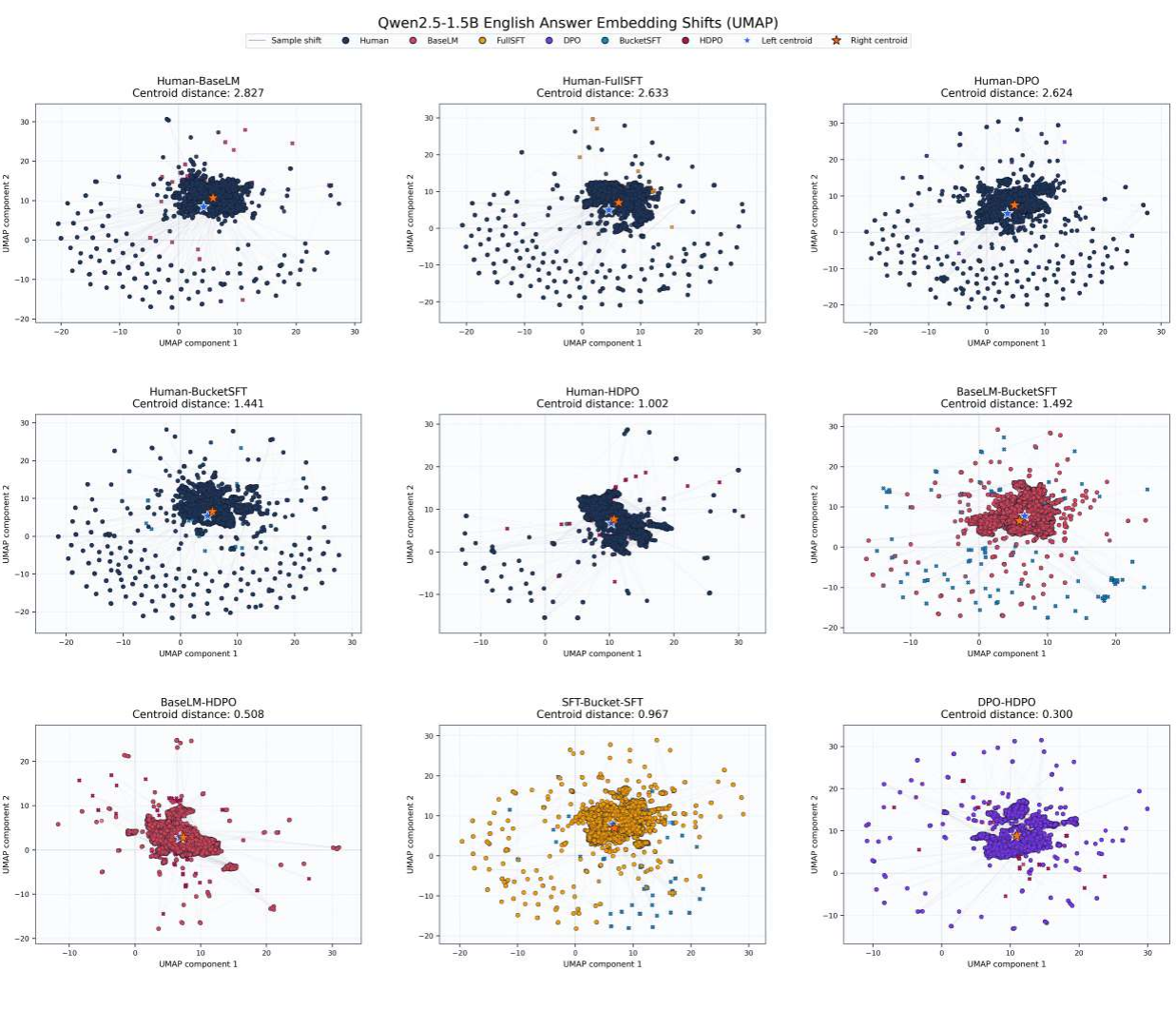}
        \vspace{0.5em}
        
        \small \textbf{(a)} English answer embedding shifts.
        \label{fig:umap-a}
    \end{minipage}
    \hfill
    \begin{minipage}{0.49\linewidth}
        \centering
        \includegraphics[width=\linewidth]{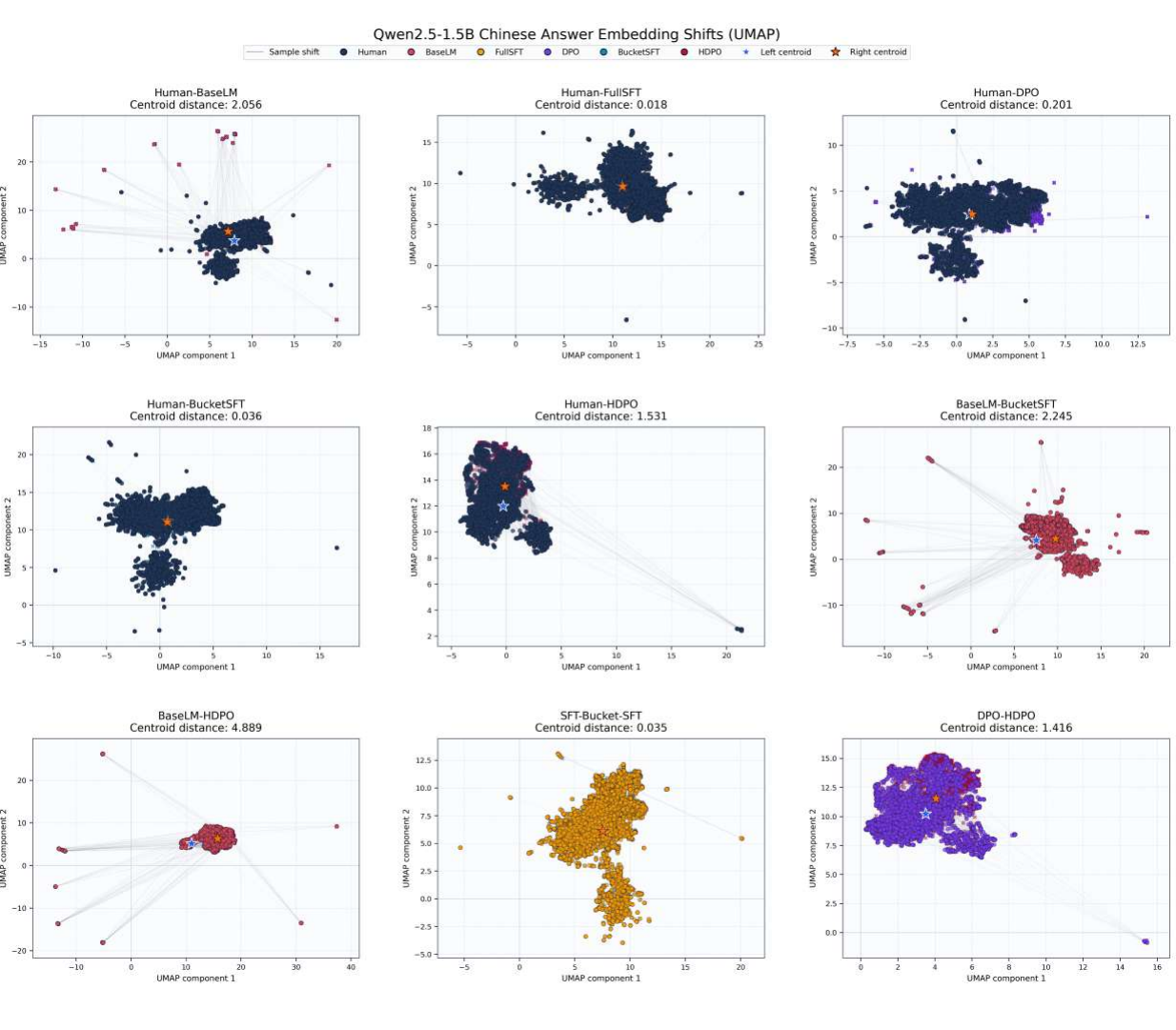}
        \vspace{0.5em}
        
        \small \textbf{(b)} Chinese answer embedding shifts.
        \label{fig:umap-b}
    \end{minipage}
    \caption{UMAP projections of Qwen2.5-1.5B answer embeddings across training methods. 
    }
    \label{fig:embedding_umap_shifts}
\end{figure*}

\subsection{Do Human Raters Recognize Conditional Alignment, and Are Their Judgements Reliable?}
\imp{Automatic metrics quantify conditional alignment, while human evaluation verifies whether these gains are perceptible in actual responses.} All methods are evaluated on the same 100 prompts per language, with 25 prompts from each model; \textit{En} outputs are assessed by 4 annotators and \textit{Zh} outputs by 2. Table~\ref{tab:combined-human-evaluation} shows a consistent ordering across usefulness, answerability, naturalness, bucket fit, style coherence, and blinded bucket identification. HDPO achieves the highest scores in both languages, although its relatively small improvement over Bucket-SFT indicates that most of the reliable gain comes from conditioning the supervised objective, with critic-guided preference optimization. Standard DPO performs below BaseLM across both languages, suggesting that unconstrained preference optimization can damage response quality and interaction-specific structure. From this we claim: \imp{preference optimization is most effective when constrained by the target interaction context,} since general response quality alone does not ensure conditional alignment. We have reported per-dataset results in Tables~\ref{tab:dataset_en}, \ref{tab:dataset_zh}.
The blinded coherence results further confirm that human raters can recognize the intended interaction regimes produced by conditional methods: Bucket-SFT and HDPO receive higher bucket-fit and style-coherence scores and are identified more accurately than generic baselines. Finally, the improvements from Full-SFT $\rightarrow$ Bucket-SFT $\rightarrow$ HDPO match the intended roles of PolyAlign's stages: Bucket-SFT first establishes differentiated response regimes, while HDPO further sharpens quality and consistency. \imp{Human evaluations are meaningful only if the reported method differences are not dominated by annotator variability.} Table~\ref{tab:human-agreement} reports (ICC(2,1)) and (ICC(2,$k$)) reliability. While \textit{En} (4 raters) yields stronger aggregate reliability than \textit{Zh} (2 raters), averaging improves stability across both. These support the reliability of the aggregated human scores while also cautioning against interpreting small differences between adjacent methods as exact individual-level preferences.

\begin{table}[h]
\centering
\resizebox{\columnwidth}{!}{%
\begin{tabular}{@{}llrrrrrrr@{}}
\toprule
& & \multicolumn{4}{c}{\textbf{Human Evaluation}} & \multicolumn{3}{c}{\textbf{Bucket Coherence}} \\
\cmidrule(lr){3-6} \cmidrule(l){7-9}
\textbf{Lang.} & \textbf{Method} & \textbf{Use} & \textbf{Ans} & \textbf{Nat} & \textbf{Avg.} & \textbf{Fit} & \textbf{Style} & \textbf{ID (\%)} \\
\midrule
\multirow{5}{*}{\rotatebox{90}{English}} 
& BaseLM     & 3.55 & 3.90 & 3.80 & 3.75 & 3.31 & 3.40 & 51.7 \\
& Full-SFT   & 4.40 & 4.60 & 4.50 & 4.50 & 3.76 & 3.84 & 64.2 \\
& DPO        & 3.20 & 3.60 & 3.10 & 3.30 & 3.14 & 3.22 & 47.5 \\
& Bucket-SFT & 4.60 & 4.80 & 4.75 & 4.72 & 4.29 & 4.34 & 78.3 \\
& HDPO       & \textbf{4.70} & \textbf{4.88} & \textbf{4.85} & \textbf{4.81} & \textbf{4.38} & \textbf{4.41} & \textbf{81.7} \\
\midrule
\multirow{5}{*}{\rotatebox{90}{Chinese}} 
& BaseLM     & 3.45 & 3.85 & 3.80 & 3.70 & 3.25 & 3.33 & 49.2 \\
& Full-SFT   & 4.35 & 4.58 & 4.48 & 4.47 & 3.70 & 3.78 & 61.7 \\
& DPO        & 3.05 & 3.55 & 2.95 & 3.18 & 3.02 & 3.13 & 44.2 \\
& Bucket-SFT & 4.60 & 4.80 & 4.80 & 4.73 & 4.24 & 4.28 & 76.7 \\
& HDPO       & \textbf{4.65} & \textbf{4.85} & \textbf{4.88} & \textbf{4.79} & \textbf{4.31} & \textbf{4.35} & \textbf{79.2} \\
\bottomrule
\end{tabular}%
}
\caption{Human evaluation ($n=100$ prompts per language; 4 raters for \textit{En}, 2 for \textit{Zh}) and blinded bucket-coherence results.}
\label{tab:combined-human-evaluation}
\end{table}


\begin{figure*}[t]
    \centering
    \begin{subfigure}[b]{0.48\textwidth}
        \centering
        \includegraphics[width=\textwidth]{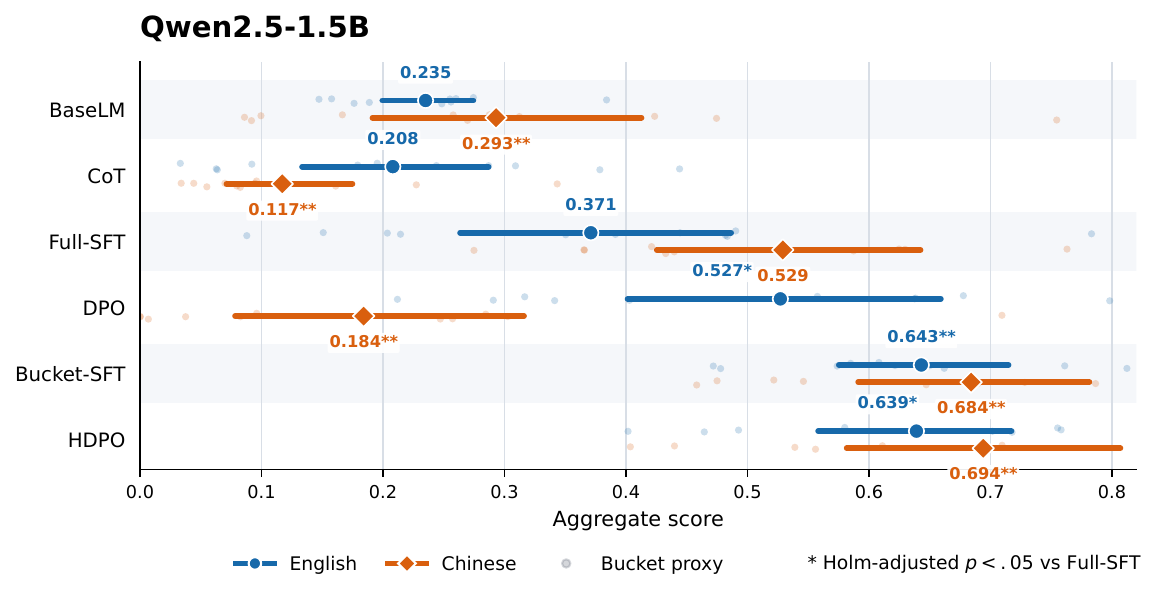}
        \caption{Bucket-SFT leads in \textit{En}; HDPO in \textit{Zh}.}
        \label{fig:sub1}
    \end{subfigure}
    \hfill
    \begin{subfigure}[b]{0.48\textwidth}
        \centering
        \includegraphics[width=\textwidth]{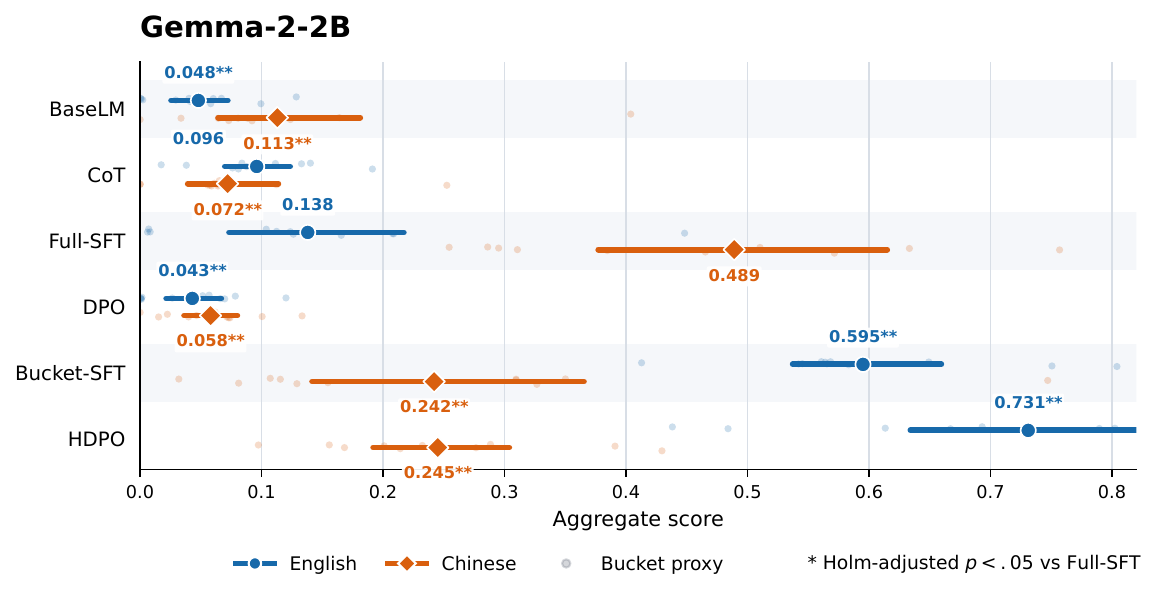}
        \caption{HDPO leads in \textit{En}; Full-SFT in \textit{Zh}.}
        \label{fig:sub2}
    \end{subfigure}
    
    \vspace{0.4cm}
    
    \begin{subfigure}[b]{0.48\textwidth}
        \centering
        \includegraphics[width=\textwidth]{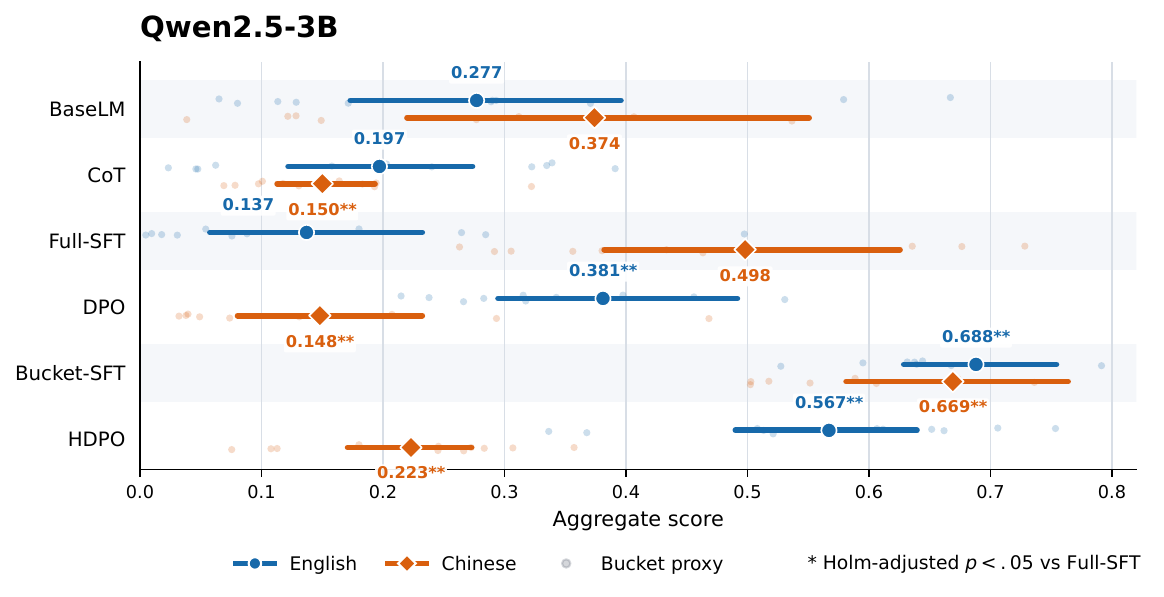}
        \caption{Bucket-SFT leads in both languages.}
        \label{fig:sub3}
    \end{subfigure}
    \hfill
    \begin{subfigure}[b]{0.48\textwidth}
        \centering
        \includegraphics[width=\textwidth]{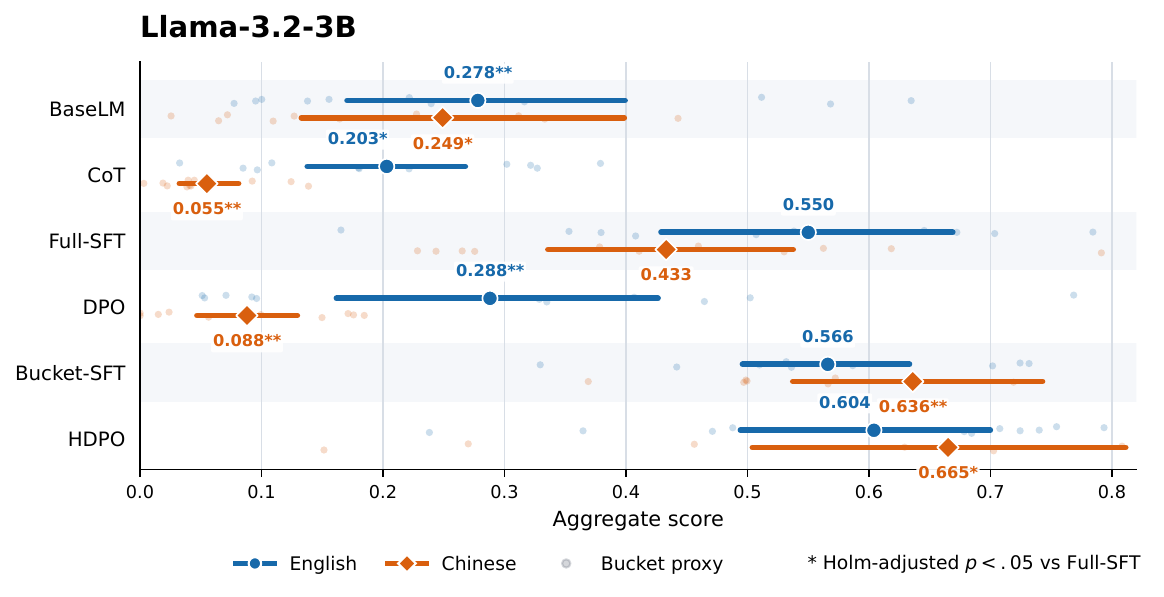}
        \caption{HDPO leads in both languages.}
        \label{fig:sub4}
    \end{subfigure}

    \caption{Bucket-level sensitivity of aggregate scores across four models, 95\% CI. 
    }
    \label{fig:CI-plots}
\end{figure*}

\subsection{Do LLM Judges Recover the Same Method Ordering as Humans?}
\imp{LLM-as-a-judge evaluation tests whether general-purpose judges recover the human-evaluation pattern.} Table~\ref{tab:llm-judge-results} shows that Full-SFT and Bucket-SFT are generally rated above BaseLM and DPO, with Bucket-SFT often matching or exceeding Full-SFT on conditional naturalness and distribution faithfulness. HDPO is more judge-sensitive: Llama rates it near the supervised methods, whereas Qwen assigns substantially lower utility and distribution-faithfulness scores, showing that judge choice materially affects conclusions about conditional alignment. The divergence of scores likely reflects a calibration mismatch rather than judge failure. \imp{General-purpose LLM judges may prefer responses that resemble a single reference or familiar assistant style, even when alternative outputs remain valid within the target human bucket.} This is especially relevant for HDPO, which preserves within-bucket variation. We therefore treat LLM-judge scores as complementary, not primary, evidence of conditional human-distribution alignment.

\subsection{$\downarrow$ AI-Detection mean $\uparrow$ Human-Likeness?}

We use three AI-text detectors as an exploratory analysis, Table~\ref{tab:ai_detection_joint_results}. Their scores vary substantially across several factors,
showing that detectability is not a stable measure of human-likeness. The joint score combines response quality with detector-based human-likeness so that low AI-detection rates are not rewarded when the underlying outputs are poor; under this measure, Bucket-SFT improves over Full-SFT in \textit{En} and \textit{Zh}, Figure~\ref{fig:quality-a}, while HDPO improves over DPO in both languages, Figure~\ref{fig:quality-b}. \imp{Low detectability is neither necessary nor sufficient for natural generation: perturbations can reduce detector confidence without improving quality, while appropriate responses may still be flagged as AI-generated.} AI detectors measure only surface patterns, not conditional naturalness.

\section{Conclusion}
We introduced PolyAlign for conditional human-distribution alignment, replacing a single global assistant target with interaction-specific response distributions. Across En and Zh, Bucket-SFT delivers the most consistent gains, while HDPO improves over standard DPO but remains sensitive to critic calibration and bucket quality. Human evaluations confirm that PolyAlign produces useful, and context-appropriate responses, while disagreement among LLM judges and AI-text detectors shows that conditional human-likeness cannot be reduced to generic preferences or surface detectability. Overall, interaction-aware post-training better preserves human response diversity without sacrificing utility.

\section*{Limitations}
This study is limited to English and Chinese, compact 1.5B--3B open models, and a partially crossed language-interaction corpus. Metadata-based buckets and hand-designed features may not capture all valid human variation, while HDPO remains sensitive to bucket quality and critic calibration. General-purpose LLM judges do not always reproduce the human ordering, and AI-text detectors are used only exploratorily because they measure surface detectability rather than usefulness or conditional naturalness. Future work should study larger multilingual and domain-diverse settings, richer learned bucket representations, improved critic calibration, broader human evaluation, and safety, cultural appropriateness, robustness, and fairness.

\section*{Potential Risks}
PolyAlign approximates human response distributions using existing datasets, metadata-defined buckets, and hand-designed linguistic features. These proxies may reproduce dataset biases, underrepresent valid linguistic or cultural variation, or be misinterpreted as universal norms of human behavior. Interaction-conditioned alignment could also be misused to generate persuasive or community-imitative responses; deployment should therefore retain independent safety safeguards and undergo language- and group-specific auditing. Our exploratory AI-text-detection analysis is not intended to facilitate detector evasion, and lower detectability should not be treated as a safety or quality objective. Critic training and multi-model evaluation add computational cost beyond standard~SFT.

\section*{Declaration of LLM usage}
During the preparation of this manuscript, ChatGPT was used for language polishing, grammar checking, and improving sentence clarity and readability, and also to review internally about the methodology, conduct analysis of the paper. All technical content, interpretations, figures, tables, and references were reviewed, verified, and finalized by the authors.

\bibliography{custom}
\clearpage
\appendix

\begin{center}
    {\Large \textbf{Appendix for PolyAlign}}
\end{center}

\section{Dataset Details and Normalization}
\label{app:data_details}

Table~\ref{tab:dataset_inventory} lists the datasets used in our bilingual English-Chinese setting and their corresponding interaction situations. The \textit{En} portion covers all five situations, while the \textit{Zh} portion covers four, with \textit{task-dialogue} currently instantiated only in \textit{En}. We therefore treat the corpus as a partially crossed language-situation design rather than a fully balanced factorial one. All datasets are normalized into a common schema containing the input, optional dialogue history, human target response, and structured metadata. We retain \textit{language}, \textit{track}, \textit{family}, \textit{style-bucket}, and \textit{length-bin} fields for every example. Fine-grained conditional regimes are defined from the tuple \textit{(language, track, family, length-bin)}. Split policies are dataset-specific. Official train/dev/test partitions are preserved when available, while train-only releases are partitioned deterministically. For multi-turn corpora, splitting is performed at the conversation or dialogue level rather than the turn level. After normalization, all examples are canonicalized and deduplicated using normalized input-response fields, with evaluation-safe precedence rules applied to prevent held-out examples from leaking into optimization data. Buckets with fewer than 20 held-out human examples are excluded from distributional evaluation to avoid unstable reference estimates.

\section{Implementation Details and Compute}
\label{app:implementation-compute}

\paragraph{Compute environment.}
All post-training and evaluation experiments were conducted on GPU nodes with eight AMD Instinct MI210 GPUs, each with 64~GiB of memory. We used PyTorch with ROCm support, Hugging Face Transformers, and LLaMA-Factory for full-parameter supervised fine-tuning and preference optimization. Unless otherwise specified, training used single-node distributed execution with one process per GPU.


\paragraph{Generation and automatic evaluation.}
For automatic evaluation, we generate responses from the final checkpoint of each method using the model's chat template. Unless otherwise stated, generation uses low-temperature decoding with temperature $0.1$, top-$p$ $0.9$, repetition penalty $1.1$, frequency penalty $0.1$, and presence penalty $0.0$. Generated responses are evaluated against the same held-out test instances and bucket metadata used for the corresponding language and model. Distributional metrics such as BNG and conditional MAUVE use the bucket-specific human reference statistics defined by the PolyAlign data construction.

\paragraph{LLM-as-a-judge evaluation.}
For rubric-based judge evaluation, we use Qwen3-30B-A3B-Instruct and Llama3.3-70B-Instruct as local vLLM-served judges. Since both judge models fit on a single MI300x 192GB GPU card, we use replica parallelism rather than tensor parallelism: each GPU hosts one independent judge replica, and evaluation examples are sharded across replicas. This setup is well suited to the judge workload, which consists of many independent scoring requests. Judge responses are parsed into the eight rubric dimensions described in Appendix~\ref{app:llm-judge-prompt-rubric} and aggregated into Overall, Utility, Conditional Naturalness, and Distribution Faithfulness scores. Each response is assigned integer scores from 1 to 5 across eight dimensions: task success, factual grounding, instruction following, reference alignment, conditional appropriateness, response shape and length, discourse naturalness, and safety, where 1 indicates severe failure, 3 indicates acceptable quality, and 5 indicates excellent quality. We map each score \(s\) to a 0--100 scale using \(100(s-1)/4\) and report weighted composite scores. The Overall score uses weights \((0.20, 0.15, 0.15, 0.10, 0.17, 0.08, 0.10, 0.05)\) across the eight dimensions; Utility emphasizes task correctness and grounding, Conditional Naturalness emphasizes bucket-appropriate style and discourse quality, and Distribution Faithfulness measures alignment with the target language, response family, style bucket, and length bin.

\paragraph{Human evaluation.} Human evaluation collects integer ratings from 1 to 5 for five criteria: task success, factual grounding, instruction following, reference alignment, and response quality. These criteria are aggregated into three metrics. Usefulness (\textsc{Use}) is computed as the mean of task success and factual grounding, Answerability (\textsc{Ans}) is computed as the mean of instruction following and reference alignment, and Naturalness (\textsc{Nat}) is represented by the response-quality score. All three metrics are reported on the original 1-5 scale, where 1 indicates severe failure, 3 indicates acceptable quality, and 5 indicates excellent quality.

\paragraph{Runtime reporting.}
All main experiments used a single node with 8 AMD Instinct MI210 GPUs (64~GiB each), PyTorch/ROCm, and LLaMA-Factory. LLM-as-a-Judge and ablation experiments used one MI300x 192GB GPU card and judges as Qwen3-30B-A3B-IT and Llama3.3-70B-IT were served with vLLM using one replica per GPU. Table~\ref{tab:compute_runtime_summary} reports wall-clock training times; BaseLM and CoT are omitted because they are inference-only. Times exclude queueing, checkpoint transfer, and evaluation post-processing.

\begin{table}[t]
\centering
\small
\setlength{\tabcolsep}{3pt}
\resizebox{\columnwidth}{!}{
\begin{tabular}{@{}c l rrrr@{}}
\toprule
\textbf{Model} & \textbf{Stage} & \multicolumn{2}{c}{\textbf{English}} & \multicolumn{2}{c}{\textbf{Chinese}} \\
\cmidrule(lr){3-4} \cmidrule(lr){5-6}
& & \textbf{Time} & \textbf{Steps} & \textbf{Time} & \textbf{Steps} \\
\midrule
\multirow{5}{*}{\rotatebox{90}{\textbf{Qwen2.5-1.5B}}}
 & Full-SFT    & 5h 35m  & 9132  & 55m    & 1349  \\
 & Bucket-SFT  & 6h 05m  & 9132  & 1h 05m & 1349  \\
 & DPO policy  & 14h 20m & 6685  & 2h 05m & 840   \\
 & HDPO critic & 8h 55m  & 68134 & 1h 30m & 10792 \\
 & HDPO policy & 15h 05m & 6685  & 2h 05m & 840   \\
\hline
\multirow{5}{*}{\rotatebox{90}{\textbf{Gemma-2-2B}}}
 & Full-SFT    & 7h 05m  & 9132  & 1h 20m & 1349  \\
 & Bucket-SFT  & 7h 45m  & 9132  & 1h 25m & 1349  \\
 & DPO policy  & 17h 50m & 6987  & 2h 50m & 915   \\
 & HDPO critic & 9h 10m  & 68134 & 1h 40m & 10792 \\
 & HDPO policy & 18h 54m & 6987  & 3h 02m & 915   \\
\hline
\multirow{5}{*}{\rotatebox{90}{\textbf{Qwen2.5-3B}}}
 & Full-SFT    & 8h 10m  & 9132  & 1h 30m & 1349  \\
 & Bucket-SFT  & 8h 30m  & 9132  & 1h 35m & 1349  \\
 & DPO policy  & 20h 05m & 6712  & 2h 40m & 745   \\
 & HDPO critic & 9h 05m  & 68134 & 1h 35m & 10792 \\
 & HDPO policy & 20h 42m & 6712  & 2h 43m & 745   \\
\hline
\multirow{5}{*}{\rotatebox{90}{\textbf{Llama-3.2-3B}}}
 & Full-SFT    & 8h 30m  & 9132  & 1h 35m & 1349  \\
 & Bucket-SFT  & 8h 45m  & 9132  & 1h 40m & 1349  \\
 & DPO policy  & 23h 11m & 7176  & 2h 55m & 767   \\
 & HDPO critic & 1h 35m  & 8517  & 1h 35m & 10792 \\
 & HDPO policy & 21h 36m & 7021  & 3h 03m & 767   \\
\bottomrule
\end{tabular}%
} 
\caption{Wall-clock training time for the full experimental grid. Times exclude queueing, checkpoint transfer, and evaluation post-processing.}
\label{tab:compute_runtime_summary}
\end{table}

\section{Further details on Evaluation metrics}
\label{sec:appendix-metrics}

Let $s \in \mathcal{S}$ denote an observed language-situation stratum, and let $b \in \mathcal{B}$ denote an eligible fine-grained bucket within that stratum. For each bucket $b$, let $H_b$ and $G_b$ denote the sets of human and model-generated responses. We report all distributional metrics as macro averages over eligible buckets.


\paragraph{Bucketed Naturalness Gap (BNG).} Let $\phi_k(y)$ denote the $k$-th linguistic feature extracted from response $y$, and let $\sigma_{b,k}$ be the empirical standard deviation of feature $k$ under the human bucket $H_b$. For a bucket $b$ with feature set $\mathcal{F}_b$, we define
\begin{equation}
\mathrm{BNG}(b)
=
\frac{1}{|\mathcal{F}_b|}
\sum_{k \in \mathcal{F}_b}
\frac{
W_1\!\left(\phi_k(G_b), \phi_k(H_b)\right)
}{
\max(\sigma_{b,k}, \varepsilon)
},
\end{equation}
where $W_1$ is the 1-Wasserstein distance and $\varepsilon > 0$. BNG measures how far the generated feature distribution drifts from the human feature distribution inside the correct bucket.

\paragraph{Conditional MAUVE.} To evaluate distributional similarity directly, we compute MAUVE between human and generated responses within each bucket: $\mathrm{C\text{-}MAUVE}$. We report global MAUVE computed over the pooled sets $H_b$ and $G_b$. 
\begin{equation}
    \mathrm{C\text{-}MAUVE}
=
\frac{1}{|\mathcal{B}|}
\sum_{b \in \mathcal{B}}
\mathrm{MAUVE}(H_b, G_b).
\end{equation}

\paragraph{Turn Dynamics Match (TDM).}
For multi-turn settings, we evaluate whether generated responses reproduce
human turn-level interaction dynamics. For each prompt--response pair $(x,y)$,
we measure the response-length ratio, lexical accommodation, and semantic
coupling:
\begin{equation}
\begin{aligned}
r(x,y) &= \frac{|y|}{\max(|x|,1)}, \\
a(x,y) &= \cos\!\bigl(\mathrm{tfidf}(x),\mathrm{tfidf}(y)\bigr), \\
c(x,y) &= \cos(z_x,z_y),
\end{aligned}
\label{eq:tdm-components}
\end{equation}
where $z_x$ and $z_y$ are latent representations obtained through an SVD
projection of TF-IDF features. For each multi-turn bucket $b$, we compute a variance-normalized Wasserstein
gap for each quantity:
\begin{equation}
\Delta_b^{u}
=
\frac{
W_1\!\left(u(G_b),u(H_b)\right)
}{
\max\!\left(\operatorname{Std}[u(H_b)],\varepsilon\right)
},
\qquad
u\in\{r,a,c\},
\label{eq:tdm-wasserstein}
\end{equation}
where $H_b$ and $G_b$ denote the human and generated responses in bucket $b$. We additionally compare latent prompt-response transition operators. Let
$Q_b$, $R_b^{H}$, and $R_b^{G}$ denote the centered latent prompt, human-response,
and generated-response matrices, respectively. The transition gap is
\begin{equation}
\Delta_b^{\mathrm{tr}}
=
\frac{
\left\|
\frac{Q_b^{\top}R_b^{G}}{n_b}
-
\frac{Q_b^{\top}R_b^{H}}{n_b}
\right\|_{F}
}{
\max\!\left(
\left\|
\frac{Q_b^{\top}R_b^{H}}{n_b}
\right\|_{F},
\varepsilon
\right)
}.
\label{eq:tdm-transition}
\end{equation}

Each gap is mapped to a bounded similarity using
$s(\Delta)=1/(1+\Delta)$. The final score is
\begin{footnotesize}
\begin{equation}
\mathrm{TDM}(b)
=
\frac{1}{4}
\left[
s(\Delta_b^{r})
+
s(\Delta_b^{a})
+
s(\Delta_b^{c})
+
s(\Delta_b^{\mathrm{tr}})
\right].
\label{eq:tdm}
\end{equation}
\end{footnotesize}

\paragraph{Naturalness--Utility Frontier (NUF).}
The preceding metrics capture complementary aspects of conditional alignment, whereas PolyAlign aims to jointly improve usefulness and conditional naturalness. We summarize this trade-off using the Naturalness-Utility Frontier (NUF). For each bucket $b$, utility and the bounded naturalness components are defined as
\begin{equation}
\begin{aligned}
u_b
&= \operatorname{clip}(\mathrm{F1}_b,0,1), \\
\widetilde{\mathrm{BNG}}_b
&= \frac{1}{1+\mathrm{BNG}(b)}, \\
\widetilde{\mathrm{C\text{-}MAUVE}}_b
&= \operatorname{clip}
   \bigl(\mathrm{C\text{-}MAUVE}_b,0,1\bigr), \\
\widetilde{\mathrm{TDM}}_b
&= \operatorname{clip}\bigl(\mathrm{TDM}_b,0,1\bigr),
\end{aligned}
\label{eq:nuf-components}
\end{equation}
where $\mathrm{F1}_b$ is the bucket-level QA-F1 score. We aggregate the naturalness components using a geometric mean:
\begin{equation}
n_b =
\left(
\widetilde{\mathrm{BNG}}_b
\widetilde{\mathrm{C\text{-}MAUVE}}_b
\right)^{1/2},
\label{eq:nuf-naturalness}
\end{equation}
and, for multi-turn buckets,
\begin{equation}
n_b^{\mathrm{mt}} =
\left(
\widetilde{\mathrm{BNG}}_b
\widetilde{\mathrm{C\text{-}MAUVE}}_b
\widetilde{\mathrm{TDM}}_b
\right)^{1/3}.
\label{eq:nuf-naturalness-mt}
\end{equation}
To unify both settings, let
\begin{equation}
n_b^{*} =
\begin{cases}
n_b^{\mathrm{mt}}, & \text{if $b$ is multi-turn},\\
n_b,               & \text{otherwise}.
\end{cases}
\label{eq:nuf-unified}
\end{equation}
Each bucket is then represented by $(u_b,n_b^{*})$ in
utility--naturalness space. Let $\mathcal{P}$ denote the set of non-dominated
bucket points. We summarize the resulting Pareto frontier by its hypervolume
relative to $(0,0)$:
\begin{equation}
\mathrm{NUF\text{-}HV}
=
\operatorname{Hypervolume}
\left(
\mathcal{P},
(0,0)
\right).
\label{eq:nuf-hv}
\end{equation}
\section{Critic Architecture and Preference-Pair Construction}
\label{app:critic-arc}
The critic encodes responses with a Transformer using a maximum length of 512 tokens and attention-masked mean pooling, then concatenates the response vector with a learned 64-dimensional bucket embedding to enable bucket-dependent support judgments. Scores are produced by a nonlinear head, $\mathrm{Linear}(d+64,256)$--GELU--Dropout$(0.1)$--$\mathrm{Linear}(256,256)$--GELU--Dropout$(0.1)$--$\mathrm{Linear}(256,1)$, trained with equally weighted regression and pairwise-ranking losses using a margin of 0.1. Preference pairs use the normalized human response as chosen and the aligned model prediction from the same source index as rejected, ensuring prompt-matched comparisons. Empty, identical, missing, source-mismatched, and out-of-range pairs are removed, while source and bucket metadata are retained for stratified auditing.
\section{HDPO Failure Modes}
\textbf{When critic guidance can fail:}
The alignment result in Theorem~3 is conditional: it applies when the critic correctly orders the chosen and rejected responses. Three departures are important in practice. First, a miscalibrated critic may predict distances that are systematically compressed, shifted, or incorrectly ranked. Second, small buckets yield uncertain feature quantiles and variances, so the support region may reflect sampling noise rather than a stable human distribution. Third, proximity to feature support is not identical to task utility: a response can match the expected length and style while omitting information needed to answer the request. In that case the DPO and distributional terms encode conflicting preferences. 
\textbf{Diagnosing extreme BNG values:}
BNG is an unbounded, variance-normalized distance, so a feature whose human within-bucket variance is near zero can contribute a very large value after a
modest absolute shift. The BNG diagnostic Table~\ref{tab:bng_diagnostics} shows that the resulting failures are not all of one kind. For Chinese Gemma-2-2B Bucket-SFT, removing the worst bucket reduces macro BNG from 39.365 to 0.468; the maximum comes from a 37-example multi-turn dialogue bucket and is a concentrated, sparse-support failure. In contrast, Chinese Gemma-2-2B HDPO remains at 23.473 after removing its maximum bucket (23.494 before removal), and Qwen2.5-3B HDPO remains at 23.500 (23.537 before removal). These two HDPO failures are broad across buckets and cannot be explained by a single outlier. English Gemma-2-2B DPO combines both phenomena: its maximum bucket reaches 280630.663, while macro BNG remains 3316.354 after that bucket is removed. Extreme BNG should be read as a diagnostic flag rather than a linear measure of practical severity. We report median, maximum, support size, and macro-without-maximum alongside the macro mean. 
\section{Ablation Studies}
\label{app:ablation-sec}
The ablation results, Table~\ref{tab:ablations}, show that Bucket-SFT + DPO achieves a higher aggregate score in seven of eight model-language settings, with especially large gains in BNG and overall robustness. Full-SFT + HDPO is competitive only for Gemma-2-2B in Chinese and occasionally improves G-MAUVE or NUF. Overall, bucket-aware training provides more consistent performance across models and languages.

\begin{table*}
\centering
\scriptsize
\setlength{\tabcolsep}{3.5pt}
\begin{tabular}{@{}llrrrr@{}}
\toprule
\textbf{Lang.} & \textbf{Agreement}
& \textbf{Use} & \textbf{Ans} & \textbf{Nat} & \textbf{Avg.} \\
\midrule
\multirow{3}{*}{\rotatebox{90}{\makecell{English\\($k=4$)}}}
& ICC(2,1)                & 0.74 & 0.80 & 0.77 & 0.83 \\
& ICC(2,4)                & 0.92 & 0.94 & 0.93 & 0.95 \\
& Krippendorff's $\alpha$ & 0.73 & 0.79 & 0.76 & 0.83 \\
\midrule
\multirow{3}{*}{\rotatebox{90}{\makecell{Chinese\\($k=2$)}}}
& ICC(2,1)                & 0.67 & 0.75 & 0.71 & 0.79 \\
& ICC(2,2)                & 0.80 & 0.86 & 0.83 & 0.88 \\
& Krippendorff's $\alpha$ & 0.65 & 0.73 & 0.69 & 0.76 \\
\bottomrule
\end{tabular}
\caption{
Inter-rater agreement for the human evaluation.}
\label{tab:human-agreement}
\end{table*}
\begin{table*}[t]
\centering
\scriptsize
\setlength{\tabcolsep}{3.5pt}
\renewcommand{\arraystretch}{1.02}

\resizebox{\textwidth}{!}{%
\begin{tabular}{@{}lrrrrcllc@{}}
\toprule
\textbf{Bucket $(t\!-\!f\!-\!r)$}
& \textbf{Train}
& \textbf{Val}.
& \textbf{Test}
& \textbf{Total}
& \textbf{Length $q_{10}/q_{50}/q_{90}$}
& \textbf{Datasets}
& \textbf{Style metadata}
& \textbf{Eligible} \\
\midrule

\multicolumn{9}{@{}l}{\textbf{English}} \\
\addlinespace[1pt]

multi-dialogue-long
& 14 & 2 & 2 & 18 & 126/142/172
& \textit{daily-dialog}
& open-chat
& No \\

multi-dialogue-medium
& 1,410 & 137 & 167 & 1,714 & 42/47/70
& \textit{daily-dialog}, \textit{multi-woz}
& open-chat, task-dialogue
& Yes \\

multi-dialogue-short
& 91,963 & 10,990 & 10,874 & 113,827 & 6/15/28
& \textit{multi-woz}, \textit{daily-dialog}
& task-dialogue, open-chat
& Yes \\

multi-dialogue-xlong
& 3 & 0 & 0 & 3 & 267/267/289
& \textit{daily-dialog}
& open-chat
& No \\

multi-qa-long
& 1 & 0 & 0 & 1 & 142/142/142
& \textit{coqa}
& qa-search
& No \\

multi-qa-medium
& 26 & 4 & 0 & 30 & 43/54/85
& \textit{coqa}
& qa-search
& No \\

multi-qa-short
& 97,987 & 10,616 & 7,983 & 116,586 & 1/2/6
& \textit{coqa}
& qa-search
& Yes \\

multi-qa-xlong
& 1 & 0 & 0 & 1 & 424/424/424
& \textit{coqa}
& qa-search
& No \\

single-assistant-long
& 1,407 & 92 & 79 & 1,578 & 127/159/214
& \textit{dolly}
& assistant-like
& Yes \\

single-assistant-medium
& 4,429 & 226 & 246 & 4,901 & 45/69/106
& \textit{dolly}
& assistant-like
& Yes \\

single-assistant-short
& 7,043 & 361 & 390 & 7,794 & 3/17/34
& \textit{dolly}
& assistant-like
& Yes \\

single-assistant-xlong
& 646 & 34 & 42 & 722 & 257/340/703
& \textit{dolly}
& assistant-like
& Yes \\

single-qa-long
& 57,554 & 3,279 & 3,314 & 64,147 & 127/158/214
& \textit{natural-questions}, \textit{eli5-category}, \textit{ms-marco}
& qa-search, longform-qa
& Yes \\

single-qa-medium
& 98,576 & 6,090 & 6,191 & 110,857 & 47/76/110
& \textit{natural-questions}, \textit{eli5-category},
  \textit{ms-marco}, \textit{squad-v2}
& qa-search, longform-qa
& Yes \\

single-qa-short
& 207,022 & 22,555 & 21,257 & 250,834 & 0/3/28
& \textit{squad-v2}, \textit{ms-marco},
  \textit{eli5-category}, \textit{natural-questions}
& qa-search, longform-qa
& Yes \\

single-qa-xlong
& 16,340 & 1,189 & 935 & 18,464 & 250/310/549
& \textit{eli5-category}, \textit{natural-questions}
& longform-qa, qa-search
& Yes \\

\addlinespace[2pt]
\midrule
\multicolumn{9}{@{}l}{\textbf{Chinese}} \\
\addlinespace[1pt]

multi-dialogue-long
& 535 & 85 & 37 & 657 & 130/171/228
& \textit{oasst2-zh}
& open-chat
& Yes \\

multi-dialogue-medium
& 785 & 83 & 51 & 919 & 46/71/110
& \textit{oasst2-zh}
& open-chat
& Yes \\

multi-dialogue-short
& 1,184 & 161 & 66 & 1,411 & 3/16/34
& \textit{oasst2-zh}
& open-chat
& Yes \\

multi-dialogue-xlong
& 1,305 & 198 & 71 & 1,574 & 279/422/743
& \textit{oasst2-zh}
& open-chat
& Yes \\

single-assistant-long
& 387 & 20 & 17 & 424 & 128/147/227
& \textit{coig-cqia}
& assistant-like
& No \\

single-assistant-medium
& 855 & 41 & 48 & 944 & 49/73/109
& \textit{coig-cqia}
& assistant-like
& Yes \\

single-assistant-short
& 384 & 16 & 58 & 458 & 9/11/36
& \textit{coig-cqia}
& assistant-like
& Yes \\

single-assistant-xlong
& 7,910 & 432 & 456 & 8,798 & 339/642/2292
& \textit{coig-cqia}
& assistant-like
& Yes \\

single-qa-long
& 8,990 & 528 & 520 & 10,038 & 133/187/206
& \textit{hc3-chinese}, \textit{dureader}, \textit{cmrc2018}
& longform-qa, qa-search
& Yes \\

single-qa-medium
& 7,510 & 607 & 499 & 8,616 & 47/74/110
& \textit{hc3-chinese}, \textit{cmrc2018},
  \textit{dureader}, \textit{drcd}
& longform-qa, qa-search
& Yes \\

single-qa-short
& 54,355 & 8,606 & 4,405 & 67,366 & 2/4/16
& \textit{drcd}, \textit{dureader},
  \textit{cmrc2018}, \textit{hc3-chinese}
& qa-search, longform-qa
& Yes \\

single-qa-xlong
& 2,136 & 110 & 129 & 2,375 & 252/339/622
& \textit{hc3-chinese}, \textit{dureader}, \textit{cmrc2018}
& longform-qa, qa-search
& Yes \\

\bottomrule
\end{tabular}%
}

\caption{
Bucket inventory for English and Chinese splits. 
}
\label{tab:bucket-inventory}
\end{table*}

\definecolor{rowbg}{HTML}{DDE2EC}

\begin{table*}[h]
\centering
\scriptsize
\setlength{\tabcolsep}{1pt} 
\renewcommand{\arraystretch}{1.10}
\begin{tabular}{llcccccccccccccccc}
\toprule
\multirow{3}{*}{\textbf{Model}} & \multirow{3}{*}{\textbf{Method}} & \multicolumn{8}{c}{\textbf{English}} & \multicolumn{8}{c}{\textbf{Chinese}} \\
\cmidrule(lr){3-10} \cmidrule(lr){11-18}
& & \multicolumn{4}{c}{\textbf{Qwen3-30B}} & \multicolumn{4}{c}{\textbf{Llama-3.3-70B}} & \multicolumn{4}{c}{\textbf{Qwen3-30B}} & \multicolumn{4}{c}{\textbf{Llama-3.3-70B}} \\
\cmidrule(lr){3-6} \cmidrule(lr){7-10} \cmidrule(lr){11-14} \cmidrule(lr){15-18}
& & \makecell{Utility} & \makecell{Cond.} & \makecell{D-Faith.} & \makecell{Overall} & \makecell{Utility} & \makecell{Cond.} & \makecell{D-Faith.} & \makecell{Overall} & \makecell{Utility} & \makecell{Cond.} & \makecell{D-Faith.} & \makecell{Overall} & \makecell{Utility} & \makecell{Cond.} & \makecell{D-Faith.} & \makecell{Overall} \\
\midrule
& BaseLM & 68.4 & 77.8 & 73.0 & 73.1 & 66.6 & 74.1 & 70.8 & 70.8 & 63.8 & 69.4 & 65.7 & 66.5 & 63.0 & 67.3 & 64.9 & 65.5 \\
& Full-SFT & 70.1 & \underline{81.2} & \underline{76.4} & \underline{75.4} & \underline{67.1} & 75.4 & 72.0 & \underline{71.5} & \underline{81.7} & \underline{87.2} & \underline{84.1} & \underline{84.4} & 80.1 & \underline{84.9} & \underline{82.8} & \underline{82.8} \\
& DPO & 69.9 & 80.8 & 75.8 & 75.0 & 66.0 & 74.4 & 71.0 & 70.5 & 62.5 & 67.0 & 63.6 & 64.8 & 60.7 & 64.5 & 62.3 & 62.8 \\
\rowcolor{rowbg} & Bucket-SFT & \cellcolor{bestbg}\textbf{73.1} & \cellcolor{bestbg}\textbf{82.7} & \cellcolor{bestbg}\textbf{78.5} & \cellcolor{bestbg}\textbf{78.0} & \cellcolor{bestbg}\textbf{69.0} & \underline{76.5} & \underline{73.6} & \cellcolor{bestbg}\textbf{73.0} & \cellcolor{bestbg}\textbf{82.4} & \cellcolor{bestbg}\textbf{87.8} & \cellcolor{bestbg}\textbf{84.6} & \cellcolor{bestbg}\textbf{85.1} & \cellcolor{bestbg}\textbf{80.4} & 85.0 & 83.0 & 83.0 \\
\rowcolor{rowbg} \multirow{-5}{*}{\rotatebox[origin=c]{90}{\textbf{Qwen2.5-1.5B}}} & HDPO & 46.8 & 55.6 & 49.1 & 51.2 & 66.8 & \cellcolor{bestbg}\textbf{77.2} & \cellcolor{bestbg}\textbf{74.2} & 70.5 & 56.9 & 65.5 & 58.8 & 59.8 & \underline{79.6} & \cellcolor{bestbg}\textbf{86.2} & \cellcolor{bestbg}\textbf{83.4} & \cellcolor{bestbg}\textbf{83.3} \\
\midrule
& BaseLM & 57.0 & 72.4 & 65.6 & 64.2 & 54.0 & 65.1 & 60.5 & 60.0 & 51.6 & 58.0 & 53.1 & 54.8 & 51.2 & 56.0 & 52.7 & 54.0 \\
& Full-SFT & 66.9 & 77.9 & 72.8 & 72.2 & 64.0 & \underline{72.2} & \underline{68.7} & \underline{68.4} & \cellcolor{bestbg}\textbf{75.8} & \cellcolor{bestbg}\textbf{82.8} & \cellcolor{bestbg}\textbf{78.8} & \cellcolor{bestbg}\textbf{79.1} & \cellcolor{bestbg}\textbf{74.0} & \cellcolor{bestbg}\textbf{79.6} & \cellcolor{bestbg}\textbf{77.1} & \cellcolor{bestbg}\textbf{77.0} \\
& DPO & 63.8 & 73.7 & 68.5 & 68.7 & 60.3 & 67.8 & 64.4 & 64.3 & 47.1 & 45.5 & 42.7 & 46.5 & 46.0 & 45.6 & 43.5 & 46.2 \\
\rowcolor{rowbg} & Bucket-SFT & \underline{67.4} & \underline{78.6} & \underline{73.7} & \underline{73.0} & \underline{63.6} & 72.0 & 68.6 & 68.0 & \underline{70.8} & \underline{79.8} & \underline{75.6} & \underline{75.3} & \underline{69.1} & \underline{75.6} & \underline{72.8} & \underline{72.5} \\
\rowcolor{rowbg} \multirow{-5}{*}{\rotatebox[origin=c]{90}{\textbf{Gemma-2-2B}}} & HDPO & \cellcolor{bestbg}\textbf{68.1} & \cellcolor{bestbg}\textbf{79.1} & \cellcolor{bestbg}\textbf{75.0} & \cellcolor{bestbg}\textbf{73.4} & \cellcolor{bestbg}\textbf{66.0} & \cellcolor{bestbg}\textbf{79.5} & \cellcolor{bestbg}\textbf{76.0} & \cellcolor{bestbg}\textbf{70.5} & 65.9 & 72.8 & 69.2 & 67.9 & 64.5 & 74.0 & 71.0 & 68.0 \\
\midrule
& BaseLM & 36.4 & 41.8 & 36.9 & 39.2 & 41.0 & 45.2 & 42.0 & 43.5 & 59.4 & 63.9 & 59.8 & 61.5 & 59.2 & 62.3 & 60.2 & 61.0 \\
& Full-SFT & \cellcolor{bestbg}\textbf{75.4} & \cellcolor{bestbg}\textbf{84.9} & \cellcolor{bestbg}\textbf{81.0} & \cellcolor{bestbg}\textbf{80.4} & \cellcolor{bestbg}\textbf{71.3} & \cellcolor{bestbg}\textbf{79.2} & \cellcolor{bestbg}\textbf{76.2} & \cellcolor{bestbg}\textbf{75.5} & \underline{84.1} & \underline{88.5} & \underline{85.9} & \underline{86.5} & \cellcolor{bestbg}\textbf{82.8} & \cellcolor{bestbg}\textbf{86.8} & \cellcolor{bestbg}\textbf{85.0} & \cellcolor{bestbg}\textbf{85.0} \\
& DPO & 64.4 & 72.5 & 68.0 & 68.6 & 61.0 & 67.0 & 64.0 & 64.3 & 72.9 & 75.7 & 72.9 & 74.1 & 70.8 & 73.5 & 72.0 & 72.4 \\
\rowcolor{rowbg} & Bucket-SFT & \underline{72.8} & \underline{81.7} & \underline{77.4} & \underline{77.3} & \underline{69.0} & \underline{76.0} & \underline{73.2} & \underline{72.8} & \cellcolor{bestbg}\textbf{84.4} & \cellcolor{bestbg}\textbf{89.0} & \cellcolor{bestbg}\textbf{86.4} & \cellcolor{bestbg}\textbf{86.8} & \underline{82.6} & \underline{86.6} & \underline{84.9} & \underline{84.8} \\
\rowcolor{rowbg} \multirow{-5}{*}{\rotatebox[origin=c]{90}{\textbf{Qwen2.5-3B}}} & HDPO & 46.9 & 55.7 & 49.8 & 50.7 & 50.5 & 60.8 & 54.5 & 55.0 & 49.9 & 56.1 & 50.2 & 51.6 & 54.0 & 64.5 & 59.0 & 58.0 \\
\midrule
& BaseLM & 52.1 & 61.9 & 56.4 & 56.9 & 51.9 & 59.2 & 55.7 & 56.0 & 68.4 & 74.0 & 70.2 & 70.9 & 66.3 & 70.3 & 68.0 & 68.5 \\
& Full-SFT & \cellcolor{bestbg}\textbf{74.3} & \underline{85.5} & \underline{81.0} & \underline{80.2} & \cellcolor{bestbg}\textbf{70.2} & \underline{79.4} & \underline{76.0} & \underline{75.1} & 75.6 & 82.8 & 79.0 & 79.3 & 74.5 & 80.3 & 77.7 & 77.6 \\
& DPO & 69.4 & 78.7 & 74.4 & 74.2 & 65.7 & 72.9 & 69.9 & 69.5 & 48.7 & 52.6 & 48.2 & 50.6 & 47.1 & 49.3 & 46.9 & 48.5 \\
\rowcolor{rowbg} & Bucket-SFT & \underline{73.0} & \cellcolor{bestbg}\textbf{86.1} & 79.4 & 78.4 & 68.4 & 76.9 & 73.8 & 72.9 & \cellcolor{bestbg}\textbf{78.6} & \underline{84.8} & \underline{81.3} & \underline{81.6} & \cellcolor{bestbg}\textbf{77.1} & \underline{82.0} & \underline{79.8} & \underline{79.7} \\
\rowcolor{rowbg} \multirow{-5}{*}{\rotatebox[origin=c]{90}{\textbf{Llama-3.2-3B}}} & HDPO & 70.8 & 83.9 & \cellcolor{bestbg}\textbf{82.0} & \cellcolor{bestbg}\textbf{80.7} & \underline{69.5} & \cellcolor{bestbg}\textbf{81.5} & \cellcolor{bestbg}\textbf{78.0} & \cellcolor{bestbg}\textbf{75.5} & 73.1 & \cellcolor{bestbg}\textbf{85.3} & \cellcolor{bestbg}\textbf{82.1} & \cellcolor{bestbg}\textbf{81.9} & 75.0 & \cellcolor{bestbg}\textbf{84.0} & \cellcolor{bestbg}\textbf{81.5} & \cellcolor{bestbg}\textbf{80.0} \\
\bottomrule
\end{tabular}%
\caption{LLM-as-a-judge results using Qwen3-30B-A3B-IT and Llama-3.3-70B-IT. Scores are evaluated on a 0-100 scale: Utility, Cond. (Conditional Naturalness), D-Faith. (Distribution Faithfulness), and Overall.
}
\label{tab:llm-judge-results}
\end{table*}

\definecolor{rowbg}{HTML}{DDE2EC}

\begin{table*}[t]
\centering
\scriptsize
\setlength{\tabcolsep}{2.7pt}
\renewcommand{\arraystretch}{1.10}
\begin{tabular}{ll*{10}{c}}
\toprule
\multirow{2}{*}{\textbf{Model}} & \multirow{2}{*}{\textbf{Method}} & \multicolumn{5}{c}{\textbf{English}} & \multicolumn{5}{c}{\textbf{Chinese}} \\
\cmidrule(lr){3-7} \cmidrule(lr){8-12}
& & \makecell{Agg\\($\uparrow$)} & \makecell{Binoc.\\AI\% ($\downarrow$)} & \makecell{Fast-DGPT\\AI\% ($\downarrow$)} & \makecell{RADAR\\AI\% ($\downarrow$)} & \makecell{Joint\\($\uparrow$)} & \makecell{Agg\\($\uparrow$)} & \makecell{Binoc.\\AI\% ($\downarrow$)} & \makecell{Fast-DGPT\\AI\% ($\downarrow$)} & \makecell{RADAR\\AI\% ($\downarrow$)} & \makecell{Joint\\($\uparrow$)} \\
\midrule
& BaseLM & 0.235 & 62.2 & 44.5 & 98.5 & 27.0 & 0.293 & 28.1 & \textbf{27.2} & 58.1 & 39.8 \\
& Full-SFT & 0.371 & \textbf{23.6} & \textbf{12.8} & \textbf{85.4} & 45.7 & 0.529 & \textbf{24.1} & \underline{26.4} & \underline{49.8} & 58.9 \\
& DPO & 0.527 & 53.6 & 27.1 & 96.8 & 46.0 & 0.184 & 40.8 & 38.3 & 59.9 & 27.4 \\
\rowcolor{rowbg} & Bucket-SFT & \textbf{0.643} & \underline{38.1} & \underline{15.1} & \underline{93.2} & \textbf{57.0} & \underline{0.684} & \underline{25.5} & 28.4 & 60.0 & \underline{65.1} \\
\rowcolor{rowbg} \multirow{-5}{*}{\rotatebox[origin=c]{90}{\textbf{Qwen2.5-1.5B}}} & HDPO & \underline{0.639} & 47.2 & 26.1 & 96.0 & \underline{51.8} & \textbf{0.694} & 37.8 & 36.7 & \textbf{35.3} & \textbf{66.3} \\
\midrule
& BaseLM & 0.048 & 55.3 & 35.4 & 96.2 & 8.5 & 0.113 & 35.3 & 35.2 & 60.0 & 18.8 \\
& Full-SFT & 0.138 & \underline{25.8} & \underline{14.4} & \textbf{84.9} & 22.3 & \textbf{0.489} & \underline{22.7} & \underline{27.5} & \underline{54.7} & \textbf{55.8} \\
& DPO & 0.043 & 54.3 & 31.0 & 95.7 & 7.8 & 0.058 & 37.0 & 43.9 & \underline{34.9} & 10.6 \\
\rowcolor{rowbg} & Bucket-SFT & \underline{0.595} & \textbf{23.8} & \textbf{7.4} & \underline{89.1} & \textbf{59.7} & \underline{0.242} & \textbf{19.3} & \textbf{25.0} & 57.9 & \underline{35.4} \\
\rowcolor{rowbg} \multirow{-5}{*}{\rotatebox[origin=c]{90}{\textbf{Gemma-2-2B}}} & HDPO & \textbf{0.731} & 42.7 & 19.7 & 95.8 & \underline{57.4} & 0.245 & 40.8 & 39.0 & \textbf{35.6} & 35.0 \\
\midrule
& BaseLM & 0.277 & \textbf{32.9} & 17.9 & 99.9 & 35.6 & 0.374 & 27.5 & \textbf{24.0} & 64.7 & 46.4 \\
& Full-SFT & 0.137 & 41.8 & \textbf{16.5} & 95.4 & 21.4 & \underline{0.498} & \textbf{23.5} & \underline{25.2} & \underline{56.6} & \underline{56.4} \\
& DPO & 0.381 & 60.0 & 43.1 & \underline{94.9} & 35.9 & 0.148 & 35.0 & 33.7 & 48.3 & 23.8 \\
\rowcolor{rowbg} & Bucket-SFT & \textbf{0.688} & \underline{39.3} & \underline{16.6} & \textbf{93.2} & \textbf{58.1} & \textbf{0.669} & \underline{24.9} & 27.4 & 59.9 & \textbf{64.7} \\
\rowcolor{rowbg} \multirow{-5}{*}{\rotatebox[origin=c]{90}{\textbf{Qwen2.5-3B}}} & HDPO & \underline{0.567} & 42.5 & 24.8 & 96.4 & \underline{50.4} & 0.223 & 34.9 & 34.0 & \textbf{32.6} & 33.4 \\
\midrule
& BaseLM & 0.278 & \textbf{45.0} & \underline{20.6} & 99.5 & 34.4 & 0.249 & \textbf{25.7} & \textbf{17.5} & 70.5 & 35.5 \\
& Full-SFT & \underline{0.550} & 49.3 & 21.5 & \underline{98.5} & \underline{48.6} & 0.433 & 32.5 & 34.2 & \underline{43.8} & 51.4 \\
& DPO & 0.288 & 51.5 & 26.9 & 99.0 & 33.8 & 0.088 & 38.3 & 46.6 & 46.9 & 15.2 \\
\rowcolor{rowbg} & Bucket-SFT & 0.566 & \underline{44.7} & \textbf{18.0} & \textbf{94.1} & \textbf{51.8} & \underline{0.636} & \underline{31.8} & \underline{31.7} & 61.1 & \underline{60.9} \\
\rowcolor{rowbg} \multirow{-5}{*}{\rotatebox[origin=c]{90}{\textbf{Llama-3.2-3B}}} & HDPO & \textbf{0.604} & 50.0 & 23.5 & 97.0 & 50.3 & \textbf{0.665} & 50.8 & 50.1 & \textbf{27.5} & \textbf{61.5} \\
\bottomrule
\end{tabular}%
\caption{AI-detection and joint quality-human-likeness results. Detector scores are AI-classification rates, where lower is better. Joint is the harmonic mean of Agg and the mean detector human-likeness score $1-\mathrm{AI\ rate}$, scaled to 0-100.}
\label{tab:ai_detection_joint_results}
\end{table*}

\begin{figure*}[t]
    \centering
    \begin{subfigure}{0.49\linewidth}
        \centering
        \includegraphics[width=\linewidth]{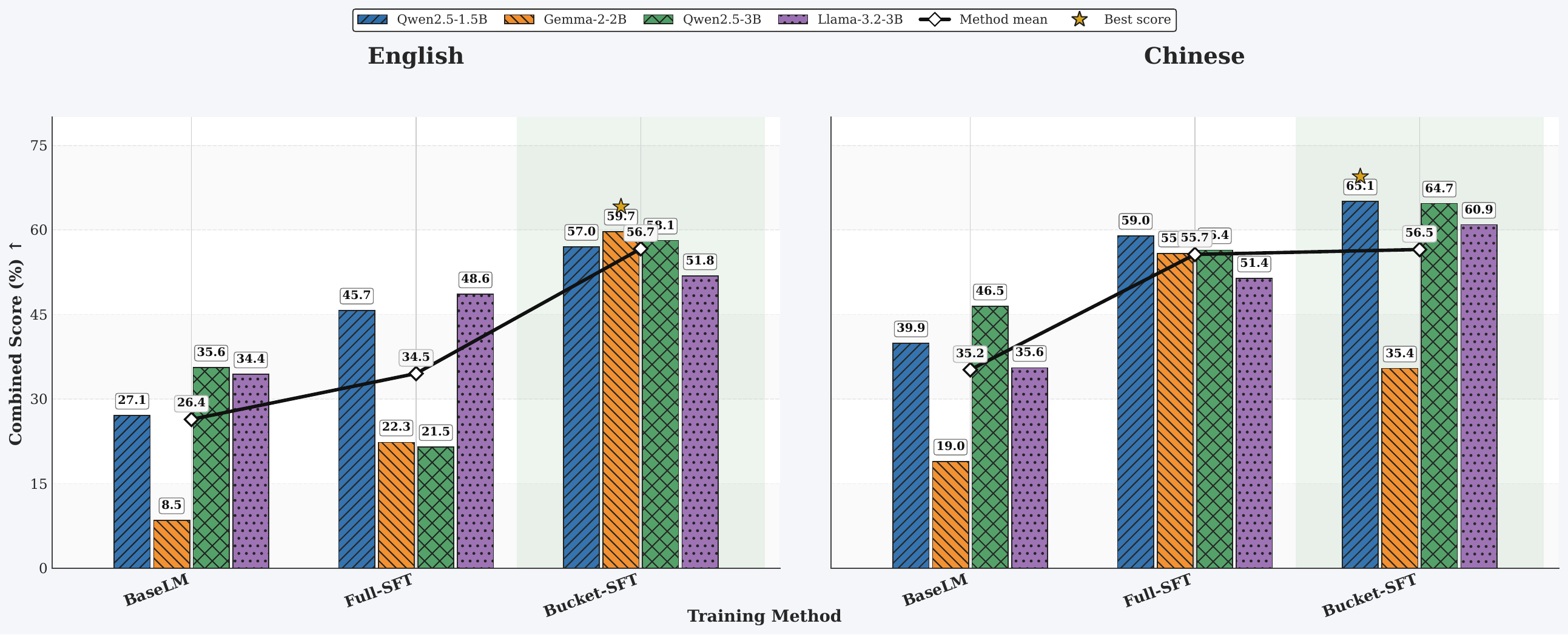}
        \vspace{-0.8em}
        \caption{BaseLM, Full-SFT, and Bucket-SFT.}
        \label{fig:quality-a}
    \end{subfigure}
    \hfill
    \begin{subfigure}{0.49\linewidth}
        \centering
        \includegraphics[width=\linewidth]{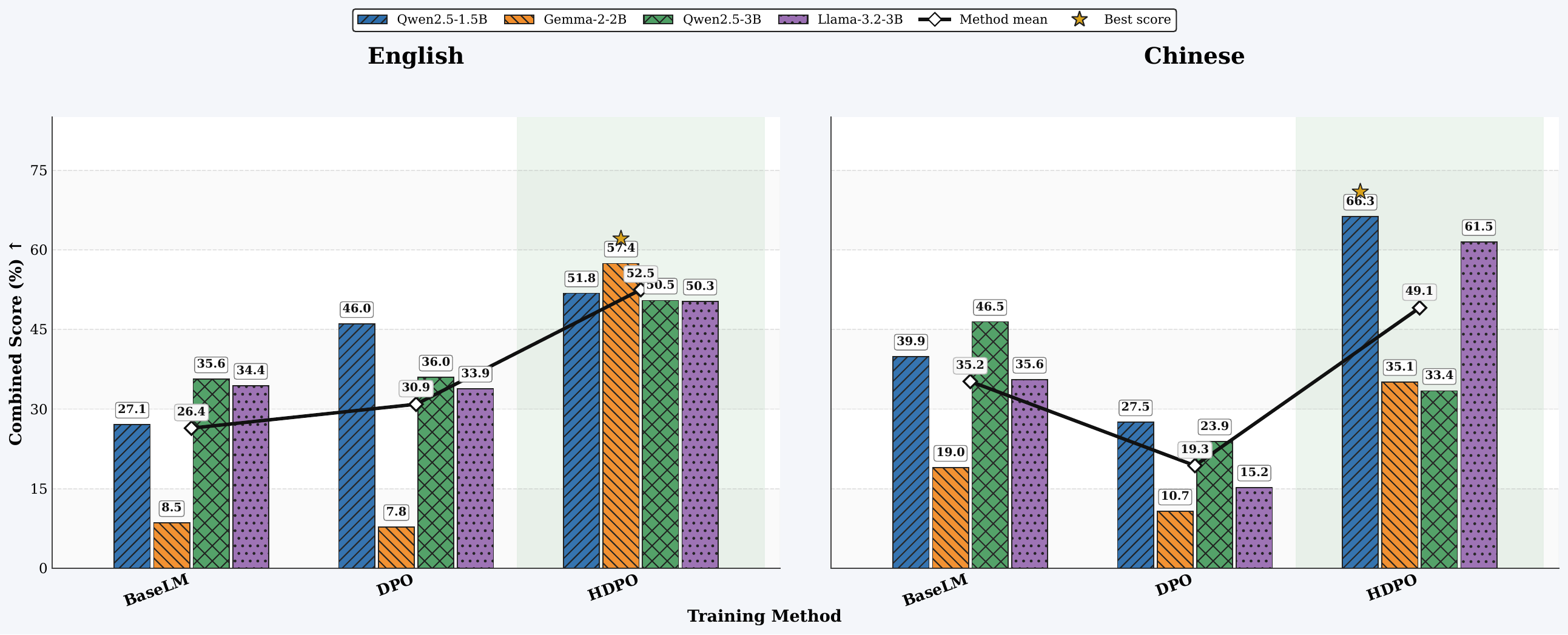}
        \vspace{-0.8em}
        \caption{BaseLM, DPO, and HDPO.}
        \label{fig:quality-b}
    \end{subfigure}
    \caption{Combined benchmark quality and detector human-likeness scores across English and Chinese models. 
    }
    \label{fig:quality-human-likeness-comparison}
\end{figure*}

\definecolor{TableHeader}{HTML}{385D8A}
\definecolor{bestdCell}{HTML}{D9EAD3}

\newcommand{\bestd}[1]{\cellcolor{bestdCell}\textbf{#1}}

\begin{table*}[t]
\centering
\scriptsize
\setlength{\tabcolsep}{3.5pt}
\begin{tabular}{llrrrrrr}
\toprule
\textbf{Model} &
\textbf{Dataset} &
\textbf{BaseLM} &
\textbf{CoT} &
\textbf{Full-SFT} &
\textbf{DPO} &
\textbf{Bucket-SFT} &
\textbf{HDPO} \\
\midrule

\multirow{8}{*}{\rotatebox{90}{\textbf{Qwen2.5-1.5B}}}
& coqa              & 0.335 & 0.044 & 0.587 & \bestd{0.617} & \second{0.613} & 0.266 \\
& dailydialog       & 0.102 & 0.052 & 0.131 & 0.133 & \second{0.150} & \bestd{0.256} \\
& dolly             & 0.319 & 0.203 & 0.289 & 0.334 & \second{0.388} & \bestd{0.609} \\
& eli5\_category    & 0.161 & 0.175 & \second{0.199} & 0.164 & \bestd{0.226} & 0.185 \\
& ms\_marco         & 0.352 & 0.132 & 0.371 & 0.409 & \second{0.595} & \bestd{0.776} \\
& multiwoz          & 0.158 & 0.081 & 0.281 & 0.279 & \bestd{0.297} & \second{0.285} \\
& natural\_questions& 0.188 & 0.238 & 0.214 & 0.288 & \second{0.294} & \bestd{0.690} \\
& squad\_v2         & 0.272 & 0.038 & 0.275 & 0.369 & \second{0.562} & \bestd{0.836} \\
\midrule

\multirow{8}{*}{\rotatebox{90}{\textbf{Gemma-2-2B}}}
& coqa              & 0.395 & 0.020 & \bestd{0.580} & \second{0.566} & 0.548 & 0.271 \\
& dailydialog       & 0.131 & 0.057 & \second{0.155} & 0.135 & 0.138 & \bestd{0.303} \\
& dolly             & 0.217 & 0.090 & 0.282 & \second{0.319} & 0.289 & \bestd{0.807} \\
& eli5\_category    & 0.191 & 0.113 & \second{0.206} & 0.177 & \bestd{0.217} & 0.184 \\
& ms\_marco         & 0.151 & 0.052 & 0.273 & 0.306 & \second{0.448} & \bestd{0.858} \\
& multiwoz          & 0.131 & 0.069 & \second{0.302} & 0.285 & 0.283 & \bestd{0.436} \\
& natural\_questions& 0.168 & 0.161 & 0.238 & 0.138 & \second{0.274} & \bestd{0.762} \\
& squad\_v2         & 0.164 & 0.008 & 0.283 & 0.344 & \second{0.505} & \bestd{0.924} \\
\midrule

\multirow{8}{*}{\rotatebox{90}{\textbf{Qwen2.5-3B}}}
& coqa              & 0.118 & 0.040 & \second{0.667} & 0.627 & \bestd{0.676} & 0.278 \\
& dailydialog       & 0.011 & 0.054 & \second{0.149} & 0.136 & \second{0.149} & \bestd{0.447} \\
& dolly             & 0.218 & 0.216 & 0.336 & 0.213 & \second{0.390} & \bestd{0.566} \\
& eli5\_category    & 0.150 & 0.182 & \second{0.191} & 0.105 & \bestd{0.229} & 0.165 \\
& ms\_marco         & 0.155 & 0.139 & 0.505 & 0.158 & \bestd{0.584} & \second{0.569} \\
& multiwoz          & 0.019 & 0.080 & \second{0.306} & 0.264 & 0.297 & \bestd{0.325} \\
& natural\_questions& 0.176 & 0.259 & 0.266 & 0.217 & \second{0.303} & \bestd{0.748} \\
& squad\_v2         & \second{0.495} & 0.031 & 0.425 & 0.255 & \bestd{0.661} & 0.399 \\
\midrule

\multirow{8}{*}{\rotatebox{90}{\textbf{Llama-3.2-3B}}}
& coqa              & 0.438 & 0.029 & \second{0.650} & \bestd{0.674} & 0.474 & 0.511 \\
& dailydialog       & 0.118 & 0.052 & \second{0.151} & 0.140 & 0.149 & \bestd{0.169} \\
& dolly             & 0.184 & 0.181 & 0.340 & 0.301 & \second{0.365} & \bestd{0.400} \\
& eli5\_category    & 0.139 & 0.174 & 0.185 & 0.180 & \second{0.219} & \bestd{0.245} \\
& ms\_marco         & 0.161 & 0.114 & 0.411 & 0.425 & \second{0.533} & \bestd{0.569} \\
& multiwoz          & 0.134 & 0.071 & \second{0.313} & 0.283 & 0.306 & \bestd{0.338} \\
& natural\_questions& 0.062 & 0.223 & 0.177 & 0.165 & \second{0.307} & \bestd{0.340} \\
& squad\_v2         & 0.344 & 0.080 & 0.310 & 0.366 & \second{0.384} & \bestd{0.419} \\
\bottomrule
\end{tabular}
\caption{Per-dataset QA-F1 scores on English benchmarks. Green and blue indicate the best and second-best results within each dataset, respectively.}
\label{tab:dataset_en}
\end{table*}
\begin{table*}[t]
\centering
\scriptsize
\setlength{\tabcolsep}{3.5pt}
\begin{tabular}{llrrrrrr}
\toprule
\textbf{Model} &
\textbf{Dataset} &
\textbf{BaseLM} &
\textbf{CoT} &
\textbf{Full-SFT} &
\textbf{DPO} &
\textbf{Bucket-SFT} &
\textbf{HDPO} \\
\midrule

\multirow{5}{*}{\rotatebox{90}{\textbf{Qwen2.5-1.5B}}}
& cmrc2018     & 0.134 & 0.028 & 0.221 & 0.146 & \second{0.364} & \bestd{0.519} \\
& coig\_cqia   & 0.069 & 0.032 & \second{0.101} & 0.058 & \second{0.101} & \bestd{0.228} \\
& drcd         & 0.304 & 0.023 & 0.566 & 0.382 & \second{0.745} & \bestd{0.912} \\
& hc3\_chinese & 0.006 & 0.004 & \second{0.010} & 0.001 & \second{0.010} & \bestd{0.201} \\
& oasst2\_zh   & 0.011 & 0.037 & 0.035 & 0.014 & \second{0.045} & \bestd{0.160} \\
\midrule

\multirow{5}{*}{\rotatebox{90}{\textbf{Gemma-2-2B}}}
& cmrc2018     & 0.065 & 0.008 & 0.172 & 0.047 & \second{0.216} & \bestd{0.548} \\
& coig\_cqia   & 0.046 & 0.003 & \second{0.095} & 0.031 & 0.076 & \bestd{0.170} \\
& drcd         & 0.170 & 0.014 & \second{0.572} & 0.128 & 0.511 & \bestd{0.905} \\
& hc3\_chinese & 0.004 & 0.002 & \second{0.011} & 0.002 & 0.010 & \bestd{0.283} \\
& oasst2\_zh   & 0.007 & 0.014 & 0.028 & 0.022 & \second{0.029} & \bestd{0.196} \\
\midrule

\multirow{5}{*}{\rotatebox{90}{\textbf{Qwen2.5-3B}}}
& cmrc2018     & 0.160 & 0.061 & 0.325 & 0.084 & \second{0.429} & \bestd{0.488} \\
& coig\_cqia   & 0.069 & 0.033 & 0.094 & 0.064 & \second{0.125} & \bestd{0.220} \\
& drcd         & 0.308 & 0.072 & 0.699 & 0.552 & \second{0.765} & \bestd{0.897} \\
& hc3\_chinese & 0.005 & 0.006 & 0.011 & 0.002 & \second{0.012} & \bestd{0.195} \\
& oasst2\_zh   & 0.010 & 0.030 & 0.031 & 0.026 & \second{0.037} & \bestd{0.495} \\
\midrule

\multirow{5}{*}{\rotatebox{90}{\textbf{Llama-3.2-3B}}}
& cmrc2018     & 0.176 & 0.013 & 0.122 & 0.067 & \second{0.329} & \bestd{0.630} \\
& coig\_cqia   & 0.040 & 0.015 & \second{0.091} & 0.067 & 0.085 & \bestd{0.138} \\
& drcd         & 0.483 & 0.014 & 0.428 & 0.155 & \second{0.725} & \bestd{0.888} \\
& hc3\_chinese & 0.004 & 0.003 & \second{0.011} & 0.003 & 0.010 & \bestd{0.169} \\
& oasst2\_zh   & 0.011 & 0.015 & \second{0.037} & 0.023 & 0.025 & \bestd{0.156} \\
\bottomrule
\end{tabular}
\caption{Per-dataset QA-F1 scores on Chinese benchmarks. Green and blue indicate the best and second-best results within each dataset, respectively.}
\label{tab:dataset_zh}
\end{table*}

\begin{table*}[t]
\centering
\scriptsize
\setlength{\tabcolsep}{5pt}
\renewcommand{\arraystretch}{1.08}
\begin{tabular}{@{}lllccccc@{}}
\toprule
\textbf{Model} &
\textbf{Lang.} &
\textbf{Configuration} &
\textbf{QA-F1} $\uparrow$ &
\textbf{BNG} $\downarrow$ &
\textbf{G-MAUVE} $\uparrow$ &
\textbf{NUF} $\uparrow$ &
\textbf{Agg} $\uparrow$ \\
\midrule

\multirow{4}{*}{\textbf{Qwen2.5-1.5B}}
& \multirow{2}{*}{EN}
& Bucket-SFT + DPO
& \best{0.449}
& \best{0.410}
& \best{0.936}
& \best{0.594}
& \best{0.649} \\

&
& Full-SFT + HDPO
& \second{0.323}
& \second{5.142}
& \second{0.919}
& \second{0.558}
& \second{0.405} \\

\cmidrule(lr){2-8}

& \multirow{2}{*}{ZH}
& Bucket-SFT + DPO
& \best{0.475}
& \best{0.377}
& \best{0.956}
& \best{0.666}
& \best{0.685} \\

&
& Full-SFT + HDPO
& \second{0.349}
& \second{0.939}
& \second{0.883}
& \second{0.517}
& \second{0.535} \\

\midrule

\multirow{4}{*}{\textbf{Gemma-2-2B}}
& \multirow{2}{*}{EN}
& Bucket-SFT + DPO
& \best{0.418}
& \best{0.407}
& \second{0.870}
& \second{0.578}
& \best{0.622} \\

&
& Full-SFT + HDPO
& \second{0.354}
& \second{372.816}
& \best{0.886}
& \best{0.635}
& \second{0.152} \\

\cmidrule(lr){2-8}

& \multirow{2}{*}{ZH}
& Bucket-SFT + DPO
& \second{0.362}
& \second{39.129}
& \best{0.949}
& \best{0.431}
& \second{0.246} \\

&
& Full-SFT + HDPO
& \best{0.421}
& \best{1.233}
& \second{0.911}
& \second{0.341}
& \best{0.492} \\

\midrule

\multirow{4}{*}{\textbf{Qwen2.5-3B}}
& \multirow{2}{*}{EN}
& Bucket-SFT + DPO
& \best{0.472}
& \best{0.215}
& \best{0.898}
& \best{0.648}
& \best{0.690} \\

&
& Full-SFT + HDPO
& \second{0.381}
& \second{794.784}
& \second{0.881}
& \second{0.591}
& \second{0.126} \\

\cmidrule(lr){2-8}

& \multirow{2}{*}{ZH}
& Bucket-SFT + DPO
& \best{0.498}
& \best{0.630}
& \best{0.958}
& \best{0.641}
& \best{0.658} \\

&
& Full-SFT + HDPO
& \second{0.431}
& \second{8.887}
& \second{0.884}
& \second{0.432}
& \second{0.359} \\

\midrule

\multirow{4}{*}{\textbf{Llama-3.2-3B}}
& \multirow{2}{*}{EN}
& Bucket-SFT + DPO
& \best{0.379}
& \best{0.406}
& \second{0.863}
& \second{0.471}
& \best{0.575} \\

&
& Full-SFT + HDPO
& \second{0.354}
& \second{1.127}
& \best{0.871}
& \best{0.675}
& \second{0.559} \\

\cmidrule(lr){2-8}

& \multirow{2}{*}{ZH}
& Bucket-SFT + DPO
& \best{0.455}
& \best{0.651}
& \best{0.938}
& \best{0.658}
& \best{0.642} \\

&
& Full-SFT + HDPO
& \second{0.262}
& \second{0.829}
& \second{0.807}
& \second{0.371}
& \second{0.455} \\

\bottomrule
\end{tabular}%
\caption{Ablations comparing Bucket-SFT + DPO with Full-SFT + HDPO. 
Green and blue indicate the best and second-best results within each model-language pair, respectively.}
\label{tab:ablations}
\end{table*}
\definecolor{panelbg}{HTML}{DDE2EC}

\begin{table*}[t]
\centering
\scriptsize
\setlength{\tabcolsep}{4.5pt}
\renewcommand{\arraystretch}{1.08}
\resizebox{\textwidth}{!}{%
\begin{tabular}{@{}llrrrrlrr@{}}
\toprule
\textbf{Model} &
\textbf{Method} &
\textbf{Macro} &
\textbf{Weighted} &
\textbf{Median} &
\textbf{Maximum} &
\textbf{Maximizing bucket} &
\(\boldsymbol{n}\) &
\textbf{Macro w-o max} \\
\midrule

\rowcolor{panelbg}
\multicolumn{9}{c}{\textbf{English (EN)}} \\
\midrule

\multirow{3}{*}{\textbf{Gemma-2-2B}}
& BaseLM
& 7708.600 & 4256.915 & 1.330 & 40694.950
& \textit{single-qa-long} & 3314 & 4409.996 \\

& Full-SFT
& 411.770 & 364.030 & 0.927 & 1907.894
& \textit{single-qa-long} & 3314 & 262.164 \\

& DPO
& 28526.000 & 3281.397 & 3.527 & 280630.663
& \textit{single-assistant-long} & 79 & 3316.354 \\

\addlinespace[2pt]
\multirow{1}{*}{\textbf{Qwen2.5-3B}}
& Full-SFT
& 649.490 & 73.861 & 4.634 & 6460.159
& \textit{single-assistant-medium} & 246 & 68.432 \\

\addlinespace[2pt]
\multirow{1}{*}{\textbf{Llama-3.2-3B}}
& DPO
& 24.431 & 5.850 & 0.839 & 101.133
& \textit{single-assistant-medium} & 246 & 16.761 \\

\midrule
\rowcolor{panelbg}
\multicolumn{9}{c}{\textbf{Chinese (ZH)}} \\
\midrule

\multirow{1}{*}{\textbf{Qwen2.5-1.5B}}
& DPO
& 16.056 & 5.298 & 11.483 & 64.244
& \textit{multi-dialogue-short} & 66 & 11.237 \\

\addlinespace[2pt]
\multirow{4}{*}{\textbf{Gemma-2-2B}}
& BaseLM
& 13.152 & 2.581 & 1.860 & 108.624
& \textit{single-assistant-medium} & 48 & 3.605 \\

& DPO
& 36.750 & 82.293 & 32.517 & 105.137
& \textit{single-qa-short} & 4405 & 29.912 \\

& Bucket-SFT
& 39.365 & 2.760 & 0.362 & 428.342
& \textit{multi-dialogue-long} & 37 & 0.468 \\

& HDPO
& 23.494 & 23.333 & 23.514 & 23.708
& \textit{multi-dialogue-short} & 66 & 23.473 \\

\addlinespace[2pt]
\multirow{2}{*}{\textbf{Qwen2.5-3B}}
& DPO
& 37.530 & 27.211 & 31.530 & 74.351
& \textit{multi-dialogue-long} & 37 & 33.848 \\

& HDPO
& 23.537 & 23.476 & 23.580 & 23.914
& \textit{single-qa-long} & 516 & 23.500 \\

\addlinespace[2pt]
\multirow{2}{*}{\textbf{Llama-3.2-3B}}
& BaseLM
& 18.230 & 10.762 & 6.485 & 122.713
& \textit{single-assistant-xlong} & 456 & 7.789 \\

& DPO
& 31.843 & 32.419 & 31.918 & 48.785
& \textit{multi-dialogue-short} & 66 & 30.149 \\

\bottomrule
\end{tabular}%
}
\caption{Bucket diagnostics for main-table BNG values above 10. ``Macro w-o max'' removes the highest-BNG bucket, revealing whether the aggregate deviation is broad or concentrated. 
}
\label{tab:bng_diagnostics}
\end{table*}

\begin{table*}[t]
\centering\scriptsize
\setlength{\tabcolsep}{4pt}
\renewcommand{\arraystretch}{1.10}
\begin{tabularx}{\textwidth}{@{}l r X@{}}
\toprule
Feature family & Count & Exact feature names \\
\midrule
LM likelihood and truncation & 43 & \textit{lm-answer-char-count, lm-conditional-answer-token-count, lm-conditional-available, lm-conditional-bits-per-token, lm-conditional-condition-text-available, lm-conditional-lm-answer-tokens-dropped, lm-conditional-lm-prefix-tokens-dropped, lm-conditional-lm-truncated, lm-conditional-logprob-gain, lm-conditional-logprob-mean, lm-conditional-logprob-sum, lm-conditional-nll-mean, lm-conditional-nll-sum, lm-conditional-perplexity, lm-conditional-perplexity-ratio, lm-conditional-prefix-token-count, lm-conditional-score-coverage, lm-conditional-scored-token-count, lm-conditional-surprisal-max, lm-conditional-surprisal-mean, lm-conditional-surprisal-min, lm-conditional-surprisal-std, lm-max-seq-length, lm-prefix-char-count, lm-unconditional-answer-token-count, lm-unconditional-available, lm-unconditional-bits-per-token, lm-unconditional-condition-text-available, lm-unconditional-lm-answer-tokens-dropped, lm-unconditional-lm-prefix-tokens-dropped, lm-unconditional-lm-truncated, lm-unconditional-logprob-mean, lm-unconditional-logprob-sum, lm-unconditional-nll-mean, lm-unconditional-nll-sum, lm-unconditional-perplexity, lm-unconditional-prefix-token-count, lm-unconditional-score-coverage, lm-unconditional-scored-token-count, lm-unconditional-surprisal-max, lm-unconditional-surprisal-mean, lm-unconditional-surprisal-min, lm-unconditional-surprisal-std} \\
Morphosyntax & 39 & \textit{base-verb-count, content-word-count, content-word-ratio, contraction-count, gerund-verb-count, modal-count, modal-ratio, non-3sg-present-verb-count, past-participle-count, past-verb-count, pos-adj-count, pos-adj-ratio, pos-adp-count, pos-adp-ratio, pos-adv-count, pos-adv-ratio, pos-conj-count, pos-conj-ratio, pos-det-count, pos-det-ratio, pos-noun-count, pos-noun-ratio, pos-num-count, pos-num-ratio, pos-pron-count, pos-pron-ratio, pos-prt-count, pos-prt-ratio, pos-verb-count, pos-verb-ratio, pos-x-count, pos-x-ratio, proper-noun-count, proper-noun-ratio, stopword-count, stopword-ratio, third-person-present-verb-count, wh-word-count, wh-word-ratio} \\
Readability and word complexity & 28 & \textit{automated-readability-index, avg-char-per-sentence, avg-char-per-token, avg-sentence-char-length, avg-syllables-per-word, avg-tokens-per-sentence, avg-word-length, coleman-liau, flesch-kincaid-grade, flesch-reading-ease, gunning-fog, long-word-count, long-word-ratio, max-sentence-char-length, max-tokens-per-sentence, medium-word-count, min-sentence-char-length, min-tokens-per-sentence, multisyllabic-word-count, multisyllabic-word-ratio, short-word-count, short-word-ratio, smog-index, std-sentence-char-length, std-tokens-per-sentence, syllable-count, very-long-word-count, very-long-word-ratio} \\
Lexical diversity and repetition & 17 & \textit{adjacent-repeat-count, corrected-ttr, dislegomena-ratio, distinct-1, distinct-2, distinct-3, hapax-ratio, lexical-density, max-token-frequency, max-token-frequency-ratio, repeated-bigram-ratio, repeated-char-sequence-count, repeated-token-ratio, root-ttr, shannon-entropy, type-token-ratio, unique-word-count} \\
Formatting and punctuation & 26 & \textit{apostrophe-count, bracket-count, bullet-line-count, code-fence-count, colon-count, comma-count, ellipsis-count, email-count, enumerated-line-count, exclam-sentence-ratio, exclamation-mark-count, hyphen-count, inline-code-marker-count, newline-count, paragraph-count, parenthesis-count, period-count, punctuation-count, punctuation-density, question-mark-count, question-sentence-ratio, quote-count, semicolon-count, slash-count, terminal-punctuation-ratio, url-count} \\
Affect & 4 & \textit{vader-compound, vader-neg, vader-neu, vader-pos} \\
Surface form, length, and casing & 20 & \textit{all-caps-token-count, all-caps-token-ratio, alphabetic-char-count, char-count, char-no-space-count, digit-char-count, lowercase-char-count, lowercase-token-count, non-ascii-char-count, number-token-count, ordinal-token-count, sentence-count, sentence-initial-capital-ratio, title-case-token-count, title-case-token-ratio, token-count, uppercase-char-count, whitespace-count, word-token-count, year-token-count} \\
\bottomrule
\end{tabularx}
\caption{Complete 177-dimensional linguistic feature inventory used by the bucket support and BNG pipelines, grouped only for readability. The mix of bounded ratios, sparse indicators, counts, and likelihood-derived quantities gives broad coverage, but also explains why variance normalization must be robust: near-constant human features can otherwise dominate BNG despite being linguistically peripheral.}
\label{tab:linguistic_features}
\end{table*}
\clearpage
\begin{minipage}{\columnwidth}

\section{LLM-as-a-Judge Prompt and Rubric}
\label{app:llm-judge-prompt-rubric}

\begin{tcolorbox}[
  width=\linewidth,
  colback=polylight,
  colframe=polyblue,
  coltitle=white,
  colbacktitle=polyblue,
  title=\textbf{LLM-as-a-Judge Prompt},
  fonttitle=\bfseries\footnotesize,
  fontupper=\footnotesize,
  boxrule=0.8pt,
  arc=2mm,
  outer arc=2mm,
  left=5pt,
  right=5pt,
  top=5pt,
  bottom=5pt,
  before skip=3pt,
  after skip=0pt
]

\setlength{\parindent}{0pt}
\setlength{\parskip}{0pt}
\setlength{\emergencystretch}{1em}

\textit{You are a careful bilingual NLP evaluator. Judge the candidate response
against the user request, context, dialogue history, human reference, and
PolyAlign bucket metadata.}

\smallskip
\textbf{\textcolor{polyteal}{Evaluation constraints.}}
Do not reveal chain-of-thought. Return one valid JSON object only. Use the full
input text provided by the caller; do not ignore or summarize long context.

\smallskip
\textbf{\textcolor{polyteal}{PolyAlign evaluation objective.}}
The candidate should preserve task utility while matching the human response
distribution appropriate to the current metadata bucket. In other words, the
response should be the right kind of answer for the right language, interaction
track, response family, style bucket, and length bin.

\smallskip
\textbf{\textcolor{polyteal}{Scoring scale.}}
Score every dimension with an integer from 1 to 5:
\textit{1 = severe failure, 2 = weak, 3 = acceptable, 4 = strong, and
5 = excellent.}

\smallskip
\textbf{\textcolor{polyteal}{Required JSON output.}}
Return JSON with exactly the following top-level keys:

\begin{tcolorbox}[
  width=\linewidth,
  colback=white,
  colframe=polygray!45,
  boxrule=0.4pt,
  arc=1.5mm,
  outer arc=1.5mm,
  left=3pt,
  right=3pt,
  top=3pt,
  bottom=3pt,
  before skip=3pt,
  after skip=3pt
]
\begin{Verbatim}[fontsize=\tiny]
{
  "scores": {
    "task_success": int,
    "factual_grounding": int,
    "instruction_following": int,
    "reference_alignment": int,
    "conditional_appropriateness": int,
    "response_shape_and_length": int,
    "discourse_naturalness": int,
    "safety": int
  },
  "major_errors": [string],
  "rationale": string
}
\end{Verbatim}
\end{tcolorbox}

\smallskip
\textbf{\textcolor{polyteal}{Input fields.}}

\begin{tcolorbox}[
  width=\linewidth,
  colback=white,
  colframe=polyblue!35,
  boxrule=0.4pt,
  arc=1.5mm,
  outer arc=1.5mm,
  left=3pt,
  right=3pt,
  top=3pt,
  bottom=3pt,
  before skip=3pt,
  after skip=3pt
]
\scriptsize

\textbf{Metadata:}

\texttt{\{language, dataset, track, family,}\\
\texttt{style\_bucket, length\_bin, bucket\_id, source\_id}\\

\smallskip
\textbf{Conversation history:}

\texttt{<history> ... </history>}

\smallskip
\textbf{Context:}

\texttt{<context> ... </context>}

\smallskip
\textbf{User request:}

\texttt{<instruction> ... </instruction>}

\smallskip
\textbf{Human reference response:}

\texttt{<reference> ... </reference>}

\smallskip
\textbf{Candidate response to judge:}

\texttt{<candidate> ... </candidate>}

\end{tcolorbox}

\smallskip
\textbf{\textcolor{polyteal}{Final instruction.}}
\textit{Now judge only the candidate response. Return valid JSON only.}

\end{tcolorbox}

\end{minipage}
\clearpage

\section{Human-Rater Prompts and Rubrics}
\label{app:human-rater-prompts}
\begin{small}
    
\begin{center}

\begin{tcolorbox}[
    enhanced jigsaw,
    breakable,
  colback=polylight,
  colframe=polyblue,
  coltitle=white,
  colbacktitle=polyblue,
  title=\textbf{Human-Judge Prompt},
  fonttitle=\bfseries,
  boxrule=0.9pt,
  arc=3mm,
  outer arc=3mm,
  left=6pt,
  right=6pt,
  top=6pt,
  bottom=6pt,
  before skip=6pt,
  after skip=8pt,
  boxed title style={
    arc=2mm,
    outer arc=2mm,
    colframe=polyblue,
    colback=polyblue
  }
]

\textit{You will evaluate one candidate response to a user request. Read the
user request, any supplied context and dialogue history, and the human
reference. Evaluate only the highlighted candidate response. The model and
training method that produced it are hidden.}

\medskip
\noindent\textbf{\textcolor{polyteal}{Evaluation principles.}}
Do not reward a response merely for matching the reference word for word;
correct paraphrases and alternative valid answers are acceptable. Judge
correctness and usefulness separately from whether the response has an
appropriate style, discourse form, and length.

\medskip
\noindent\textbf{\textcolor{polyteal}{Scoring scale.}}
For each criterion, assign an integer score from 1 to 5:

\begin{itemize}
  \item \textbf{1:} Severe failure.
  \item \textbf{2:} Major weakness.
  \item \textbf{3:} Acceptable, but with noticeable problems.
  \item \textbf{4:} Strong, with only minor problems.
  \item \textbf{5:} Excellent, with no material problem.
\end{itemize}

\medskip
\noindent\textbf{\textcolor{polyteal}{Evaluation criteria.}}

\begin{itemize}
  \item \textbf{Task success:}
  Does the response answer the request and provide the information or action
  the user needs?

  \item \textbf{Factual grounding:}
  Are its factual claims supported by the supplied context or reference, or
  otherwise accurate? Do not penalize the absence of facts that are unnecessary
  for the task.

  \item \textbf{Instruction following:}
  Does the response respect the explicit constraints, requested language,
  requested format, and dialogue context?

  \item \textbf{Reference alignment:}
  Is its substantive content compatible with the human reference, while
  allowing correct paraphrases and alternative valid formulations?

  \item \textbf{Response quality:}
  Is the response clear, coherent, natural, and appropriately detailed for the
  interaction, without being materially incomplete or unnecessarily verbose?
\end{itemize}

\medskip
\noindent\textbf{\textcolor{polyteal}{Safety and validity.}}
Flag a safety concern only when the candidate response itself introduces
harmful, unsafe, or disallowed content. When assigning a score of 1 or 2, use
the optional comment field to identify the specific error. If the item cannot
be evaluated because essential context is missing, select
\emph{invalid item} rather than inferring the missing information.

\medskip
\noindent\textbf{\textcolor{polyteal}{Final instruction.}}
\textit{Judge only the highlighted candidate response and submit one score for
each criterion.}

\end{tcolorbox}

\end{center}
\end{small}
\newpage
\begin{center}
\begin{tcolorbox}[
  enhanced jigsaw,
  breakable,
  colback=polylight,
  colframe=polyblue,
  coltitle=white,
  colbacktitle=polyblue,
  title=\textbf{Bucket-Coherence Evaluation Prompt},
  fonttitle=\bfseries,
  boxrule=0.9pt,
  arc=3mm,
  outer arc=3mm,
  left=6pt,
  right=6pt,
  top=6pt,
  bottom=6pt,
  before skip=6pt,
  after skip=6pt,
  boxed title style={
    arc=2mm,
    outer arc=2mm,
    colframe=polyblue,
    colback=polyblue
  }
]

\textit{This task evaluates interaction form rather than factual correctness.
Read the conversational situation and the candidate response, then judge
whether the response is appropriate for that setting.}

\medskip
\noindent\textbf{\textcolor{polyteal}{Required judgments.}}

Rate the following two dimensions from 1 to 5:

\begin{itemize}
  \item \textbf{Situational fit:}
  How well does the response fit the conversational situation and expected
  interaction pattern?

  \item \textbf{Form coherence:}
  Do its style, discourse structure, response family, and length form a
  coherent response for that situation?
\end{itemize}

\medskip
\noindent\textbf{\textcolor{polyteal}{Bucket selection.}}
After assigning the two ratings, select the best-matching bucket description
from the provided alternatives. All alternatives share the same language and
interaction track. Do not base the decision on topic words alone. Instead,
consider the response family, amount of detail, discourse form, style, length,
and expected interaction pattern.

\medskip
\noindent\textbf{\textcolor{polyteal}{Final instruction.}}
\textit{Evaluate the response's interaction form independently of its factual
correctness, then select the single most appropriate bucket.}

\end{tcolorbox}

\end{center}
\clearpage
\onecolumn
\section{Proofs of Theorems}
\subsection{Proof of Theorem~\ref{thm:dist_sft_macro}: Bucket-SFT Optimizes Exact Macro Bucket Risk}
\label{app:proof_dist_sft_macro}

\begin{proof}
By definition, every example in bucket $b$ receives the same pre-serialization weight
\[
\widetilde w_b = \frac{N}{|\Bset|\, n_b}.
\]
Therefore, for any fixed bucket $b \in \Bset$,
\[
\sum_{i:\, b_i=b} \widetilde w_{b_i}
=
\sum_{i:\, b_i=b} \frac{N}{|\Bset|\, n_b}
=
n_b \cdot \frac{N}{|\Bset|\, n_b}
=
\frac{N}{|\Bset|}.
\]
This proves the equal-mass identity.

Now sum over all buckets:
\[
\sum_{i=1}^N \widetilde w_{b_i}
=
\sum_{b \in \Bset}\sum_{i:\, b_i=b}\widetilde w_{b_i}
=
\sum_{b \in \Bset} \frac{N}{|\Bset|}
=
N.
\]
Hence
\[
\Ldist(\theta)
=
\frac{1}{N}
\sum_{i=1}^N \widetilde w_{b_i}\,\ell_i(\theta).
\]
Grouping by bucket gives
\[
\Ldist(\theta)
=
\frac{1}{N}
\sum_{b \in \Bset}
\sum_{i:\, b_i=b}
\frac{N}{|\Bset|\, n_b}\,\ell_i(\theta)
=
\frac{1}{|\Bset|}
\sum_{b \in \Bset}
\frac{1}{n_b}
\sum_{i:\, b_i=b}\ell_i(\theta),
\]
which is exactly the macro average of bucket-wise empirical risks.
\end{proof}
\subsection{Proof of Theorem~\ref{thm:support_distance}: Bucket-Support Distance Is a Continuous Relaxation of Human Membership}
\label{app:proof_support_distance}

\begin{proof}
Fix bucket $b$ and a feature vector $z$ with $J_b(z)\neq \varnothing$.

For each $j \in J_b(z)$, define
\[
d_{bj}(z_j)
:=
\frac{\pos{l_{bj}-z_j}+\pos{z_j-u_{bj}}}{s_{bj}}.
\]
Since $s_{bj}>0$ by definition and each positive-part term is nonnegative,
\[
d_{bj}(z_j)\ge 0
\qquad \text{for every } j \in J_b(z).
\]
Averaging over $j$ gives
\[
D_b(z)=\frac{1}{|J_b(z)|}\sum_{j\in J_b(z)} d_{bj}(z_j)\ge 0,
\]
which proves (i).

For (ii), first suppose that $z_j \in [l_{bj},u_{bj}]$ for every $j\in J_b(z)$. Then for every such $j$,
\[
\pos{l_{bj}-z_j}=0,
\qquad
\pos{z_j-u_{bj}}=0,
\]
so $d_{bj}(z_j)=0$. Hence $D_b(z)=0$.

Conversely, suppose $D_b(z)=0$. Since $D_b(z)$ is the average of nonnegative numbers, every summand must be zero:
\[
d_{bj}(z_j)=0
\qquad \text{for every } j\in J_b(z).
\]
Because $s_{bj}>0$, this implies
\[
\pos{l_{bj}-z_j}+\pos{z_j-u_{bj}}=0
\qquad \text{for every } j\in J_b(z).
\]
A sum of two nonnegative terms is zero iff both terms are zero, so
\[
\pos{l_{bj}-z_j}=0
\quad \text{and} \quad
\pos{z_j-u_{bj}}=0
\qquad \text{for every } j\in J_b(z),
\]
which is equivalent to
\[
l_{bj}\le z_j \le u_{bj}
\qquad \text{for every } j\in J_b(z).
\]
This proves (ii).

Finally, each map
\[
z_j \mapsto \pos{l_{bj}-z_j}
\quad \text{and} \quad
z_j \mapsto \pos{z_j-u_{bj}}
\]
is continuous and piecewise linear. Dividing by the positive constant $s_{bj}$ preserves these properties, and finite sums and averages preserve them as well. Hence $D_b(z)$ is continuous and piecewise linear in $z$, proving (iii).
\end{proof}

\subsection{Proof of Theorem~\ref{thm:hdpo_alignment}: Distributional Alignment of the HDPO Regularizer with Sigmoid DPO}
\label{app:proof_hdpo_alignment}

\begin{proof}
Write
\[
s^+ := s_\phi(y^+,b),
\qquad
s^- := s_\phi(y^-,b),
\qquad
p_\theta := \sigmoid(\beta \Delta_\pi).
\]
Then the HDPO regularizer is
\[
R_{\mathrm{HDPO}}(\theta)
=
p_\theta s^+ + (1-p_\theta)s^-
=
s^- + p_\theta(s^+ - s^-).
\]
Differentiate with respect to $\Delta_\pi$.
Since
\[
\frac{d}{d\Delta_\pi}\sigmoid(\beta \Delta_\pi)
=
\beta\, \sigmoid(\beta \Delta_\pi)\bigl(1-\sigmoid(\beta \Delta_\pi)\bigr)
=
\beta\, p_\theta(1-p_\theta),
\]
we obtain
\[
\frac{d R_{\mathrm{HDPO}}}{d \Delta_\pi}
=
\beta\, p_\theta(1-p_\theta)\,(s^+-s^-).
\]
This proves the derivative formula.

Now suppose $s^+<s^-$. Because $\beta>0$ and $p_\theta(1-p_\theta)>0$ for all finite $\Delta_\pi$, it follows that
\[
\frac{d R_{\mathrm{HDPO}}}{d \Delta_\pi}<0.
\]
Therefore increasing $\Delta_\pi$ decreases $R_{\mathrm{HDPO}}$, so minimizing the regularizer favors larger chosen-vs-rejected policy margin.

Next consider the sigmoid DPO loss
\[
L_{\mathrm{DPO}}(\Delta_\pi;\Delta_{\mathrm{ref}})
=
-\log \sigmoid\!\bigl(\beta(\Delta_\pi-\Delta_{\mathrm{ref}})\bigr).
\]
Let
\[
u := \beta(\Delta_\pi-\Delta_{\mathrm{ref}}).
\]
Then
\[
L_{\mathrm{DPO}} = -\log \sigmoid(u).
\]
Using
\[
\frac{d}{du}[-\log \sigmoid(u)]
=
-(1-\sigmoid(u)),
\]
and the chain rule $\frac{du}{d\Delta_\pi}=\beta$, we get
\[
\frac{d L_{\mathrm{DPO}}}{d \Delta_\pi}
=
-\beta\bigl(1-\sigmoid(u)\bigr)
=
-\beta\Bigl(1-\sigmoid\bigl(\beta(\Delta_\pi-\Delta_{\mathrm{ref}})\bigr)\Bigr)
<0.
\]
Thus increasing $\Delta_\pi$ also decreases the sigmoid DPO loss.

Hence, whenever $s^+<s^-$, both the HDPO regularizer and the sigmoid DPO term are reduced by increasing $\Delta_\pi$. They are therefore distributionally aligned during optimization.
\end{proof}
\end{document}